\theoremstyle{plain}
\newtheorem{theorem}{Theorem}[section]
\newtheorem{proposition}[theorem]{Proposition}
\theoremstyle{definition}
\newtheorem{definition}[theorem]{Definition}
\theoremstyle{remark}
\newcolumntype{C}[1]{>{\let\newline\\\arraybackslash\hspace{-1pt}}m{#1}}
\newcolumntype{L}[1]{>{\let\newline\\\arraybackslash\hspace{-1pt}}m{#1}}
\begin{document}

\title{
A Benchmark Suite for Evaluating Neural Mutual Information
Estimators on Unstructured Datasets
}

\author{%
  Kyungeun Lee \\
  LG AI Research \\
  Seoul, Republic of Korea \\
  \texttt{kyungeun.lee@lgresearch.ai} \\
  \And
  Wonjong Rhee \\
  Seoul National University \\
  Seoul, Republic of Korea \\
  \texttt{wrhee@snu.ac.kr} \\
}

\vskip 0.3in

\maketitle

\begin{abstract}
Mutual Information (MI) is a fundamental metric for quantifying dependency between two random variables. When we can access only the samples, but not the underlying distribution functions, we can evaluate MI using sample-based estimators. Assessment of such MI estimators, however, has almost always relied on analytical datasets including Gaussian multivariates. Such datasets allow analytical calculations of the true MI values, but they are limited in that they do not reflect the complexities of real-world datasets. This study introduces a comprehensive benchmark suite for evaluating neural MI estimators on unstructured datasets, specifically focusing on images and texts. By leveraging same-class sampling for positive pairing and introducing a binary symmetric channel trick, we show that we can accurately manipulate true MI values of real-world datasets. Using the benchmark suite, we investigate seven challenging scenarios, shedding light on the reliability of neural MI estimators for unstructured datasets. 
\end{abstract}

\section{Introduction}
\label{sec:intro}

Mutual Information (MI), denoted as $I(X;Y)$, serves as a fundamental measure in quantifying the dependency between two random variables~\citep{cover1999elements}. It is mathematically defined as:
\begin{equation*}
    I(X; Y) = \mathbb{E}_{p(x,y)}\log{\left[ \frac{p(x,y)}{p(x)p(y)} \right]}.
\end{equation*}
In practice, we often rely on the estimations instead of the exact calculation of MI because we can only access the examples sampled from joint and marginals 
but not the underlying distribution functions ($p(x,y)$ and $p(x)p(y)$). To this end, various sample-based MI estimators have been proposed~\citep{fraser1986binning,shwartz2017opening,kraskov2004ksg,belghazi2018mine,poole2019variational,song2019understanding,song2020mlcpc,cheng2020club}, and they have played a key role in improving the deep learning performance across diverse applications, including generative models~\citep{chen2016infogan}, language representation learning~\citep{oord2018cpc,wang2020infobert}, domain generalization~\citep{li2022invariant}, anomaly detection~\citep{lei2023mutual}, and self-supervised learning~\citep{hjelm2018infomax,bachman2019learning,chen2020simclr,chen2020simsiam,grill2020byol}.

\begin{figure}
    \centering
    \subfloat[Gaussian]{
    \includegraphics[height=2.8cm]{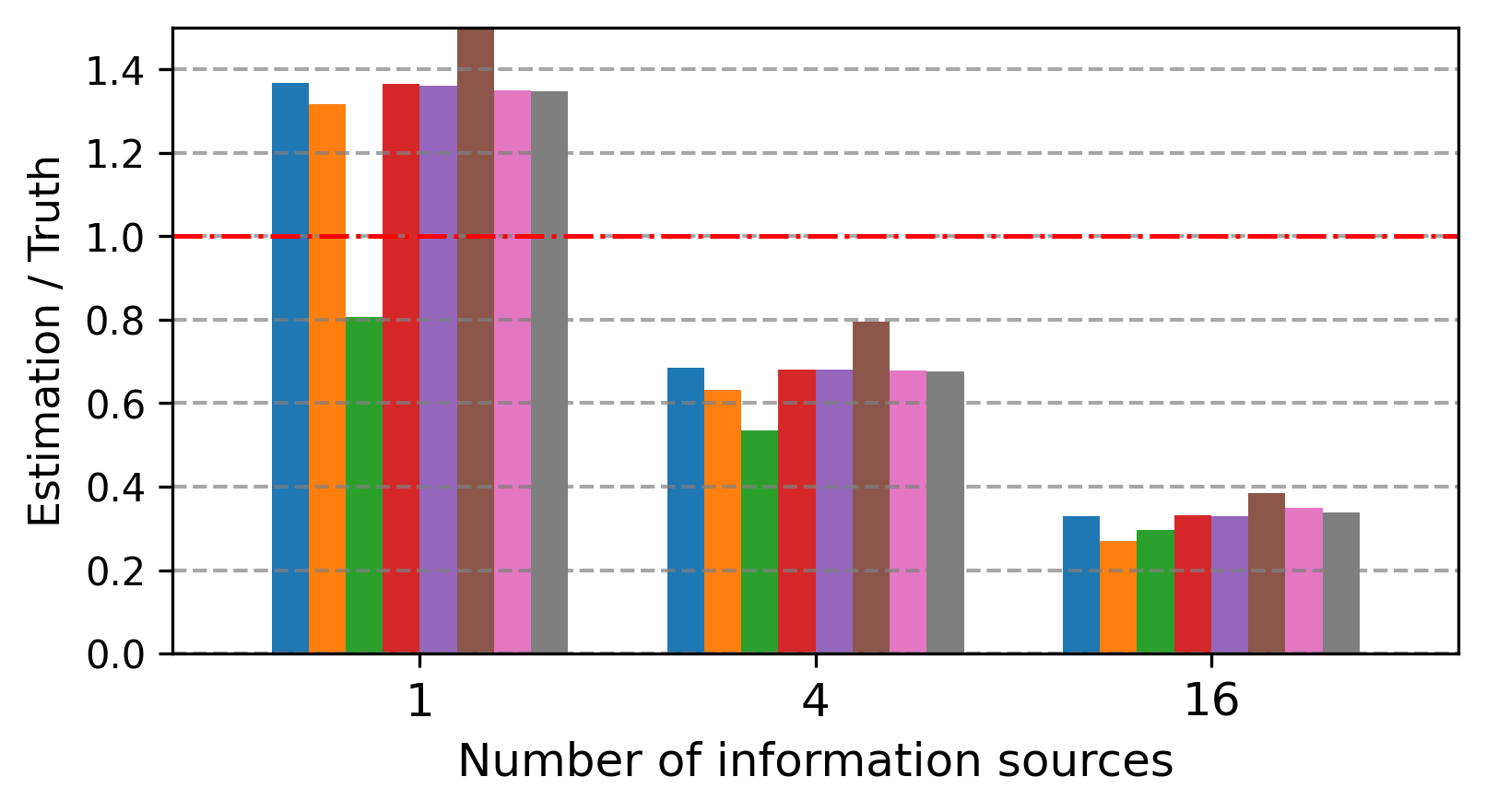}
    \label{fig:fig1-1}
    }
    \subfloat[Images]{
    \includegraphics[height=2.8cm]{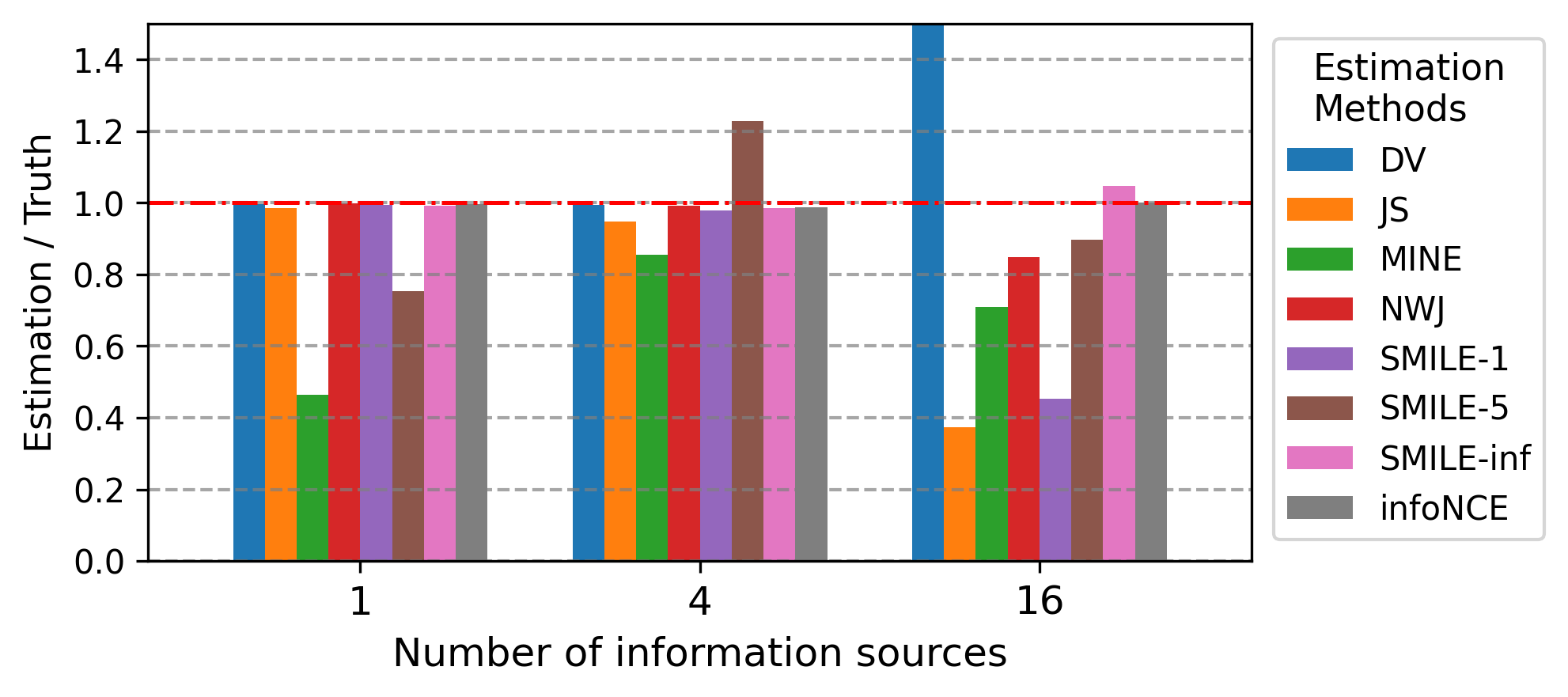}
    \label{fig:fig1-2}
    } \\\vspace{-0.2cm}
    \subfloat[Texts]{
    \includegraphics[height=2.8cm]{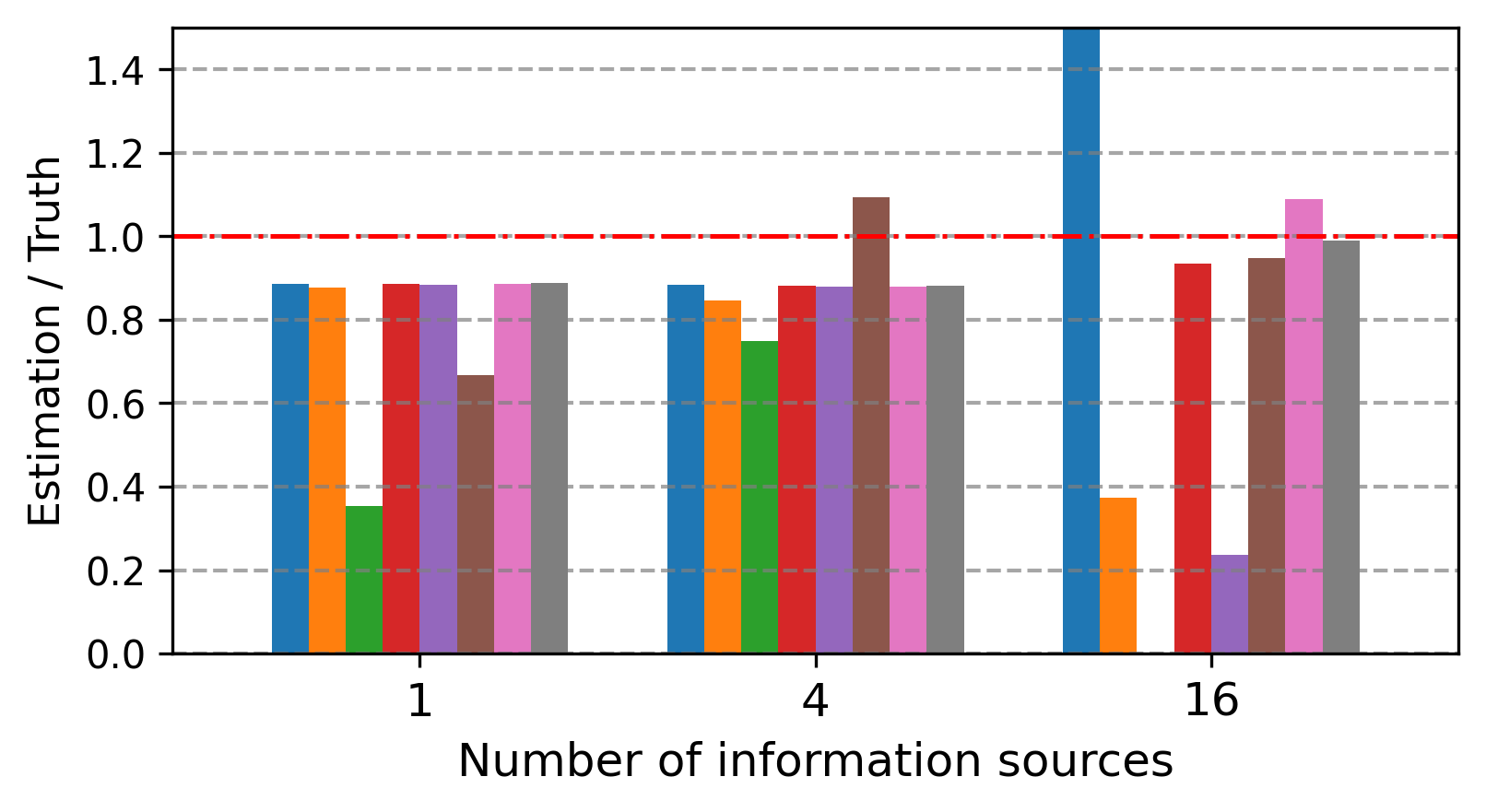}
    \label{fig:fig1-3}
    }
    \subfloat[Layer-dependency]{
    \includegraphics[height=2.8cm]{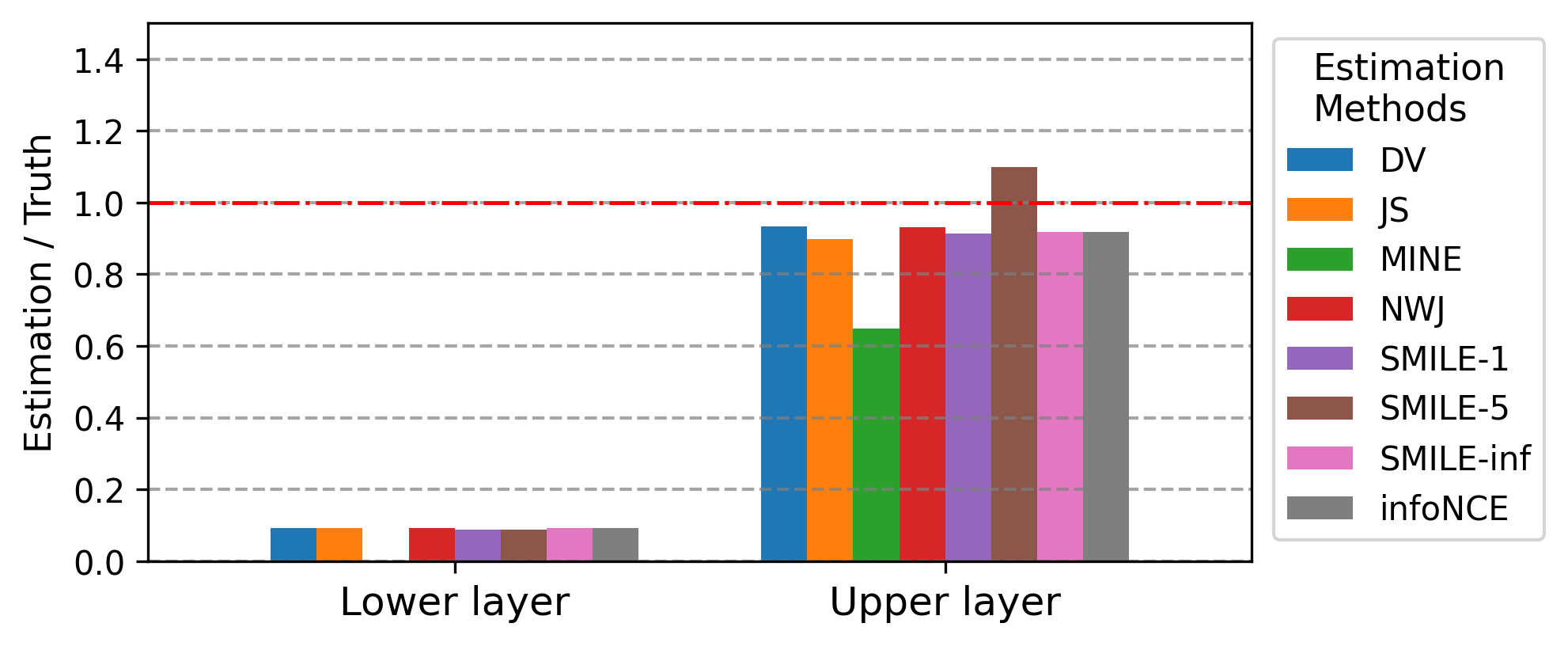}
    \label{fig:fig1-4}
    }
    \caption{Exemplary findings from our benchmark suite. The ratio between estimated MI value and true MI value is shown in y-axis. A ratio close to 1.0 indicates a highly accurate MI estimation. (a, b, c) When the number of independent information sources is increased to 16, all MI estimators become inaccurate for Gaussian dataset (shown in (a)) but some MI estimators remain accurate for image dataset (shown in (b)) and text dataset (shown in (c)). (d) When MI is estimated using embeddings from different layers of ResNet-50, MI estimators stay accurate only for the upper layers. }
    \label{fig:figure1}
\vspace{-0.5cm}
\end{figure}

Despite the huge success of MI estimators in developing useful real-world applications, the accuracy of MI estimators on real-world datasets largely remains underexplored because the true MI values cannot be calculated for such datasets. Gaussian datasets, the primary benchmark in the existing studies~\citep{belghazi2018mine,poole2019variational,song2019understanding,song2020mlcpc,cheng2020club}, do not adequately represent the complexity of real-world datasets. This raises a fundamental question: \textit{do estimators that perform well on Gaussian datasets also excel with more complex datasets like images or texts?} 
Recently, \citet{czyz2023beyond} have explored non-Gaussian datasets to evaluate MI estimator accuracy. However, their evaluation is also limited to datasets with tractable distributions, such as Student's t-distributions, still far from being representative of real-world datasets. 
To address this gap, we present a method for evaluating MI estimators on any dataset in the absence of underlying distribution functions. Our approach employs same-class sampling as positive pairing~\citep{lee2023towards} and binary symmetric channels~\citep{cover1999elements} for precise manipulation of the true MI values.

We have developed a benchmark suite based on our method, encompassing three data domains for Gaussian multivariates, images, and sentence embeddings. To demonstrate its usefulness, we examine performance of several neural MI estimators over seven key aspects in Section~\ref{sec:practice}. 
The seven aspects are
critic architecture, 
critic capacity, 
choice of neural MI estimator, 
number of information sources, 
representation dimension,
strength of nuisance,
and layer-dependency. 
Through the examination of the seven aspects, we report interesting findings from our benchmark suite. For instance, when the number of information  sources become large, MI estimation is relatively robust for image and text datasets than for Gaussian dataset as shown in Figure~\ref{fig:figure1}~(a,b,c). We also report that a larger critic capacity does not ensure a higher estimation accuracy, there is no single MI estimator that provides a universally superior performance over the three data domains, representation dimension can be as large as 10,000 without degrading the accuracy of MI estimators, MINE~\citep{belghazi2018mine} turns out to be relatively robust to nuisance, and MI estimation can be surprisingly inaccurate for lower-layer representations while it is typically much more accurate for upper-layer representations~(see Figure~\ref{fig:fig1-4}). 
The benchmark suite and codebase are available in \url{https://github.com/kyungeun-lee/mibenchmark}.
\section{Backgrounds: Neural Mutual Information Estimators}
\label{sec:relatedworks}

Mutual information between two random variables $X$ and $Y$ is defined as follows. 
\begin{equation*}
\label{eq:MI}
    I(X; Y) \triangleq KL(p(x,y)||p(x)p(y)) = \mathbb{E}_{p(x,y)}\log{\left[ \frac{p(x,y)}{p(x)p(y)} \right]}
\end{equation*}
When only a finite set of joint samples is available, the exact MI cannot be calculated, but an estimation can be made. Among the known MI estimation methods, including simple binning~\citep{fraser1986binning,shwartz2017opening} and non-parametric kernel-density estimators~\citep{kraskov2004ksg}, variational estimators based on variational bounds and deep neural networks (DNN) modeling have become dominant for complex datasets~\citep{belghazi2018mine,poole2019variational,song2019understanding,song2020mlcpc,cheng2020club}. 
These DNN-based estimators are commonly referred to as neural MI estimators.
In this study, we focus on neural MI estimators due to their superior performance in handling large sample sizes and high-dimensional data, as demonstrated in previous works~\citep{gao2015efficient,belghazi2018mine,poole2019variational,czyz2023beyond}.

\begin{table*}[bt!]
\centering
\caption{Summary of neural mutual information estimators.
For the optimization step, we learn $f^*(x,y)$ by maximizing the optimization loss $\mathcal{L}(f(x,y))$ with a given batch size $K$. For the estimation step, we evaluate MI values as $\hat{I}(X;Y)$. DV, NWJ, and InfoNCE utilize the same formulation for optimization and estimation.} 
\resizebox{\textwidth}{!}{%
    \renewcommand{\arraystretch}{2.15}
    \begin{tabular}{l||l|l} \toprule
         Estimator & Optimization Loss - $\mathcal{L}(f(x,y))$ & Estimate Evaluation - $\hat{I}(X;Y)$ \\ \hline\hline
         DV~\citep{donsker1983asymptotic} & \multicolumn{2}{l}{$\mathcal{L}_\text{DV}(f(x,y))=\hat{I}_\text{DV}(X;Y) =\mathbb{E}_{p(x,y)}[f(x,y)]-\log{\mathbb{E}_{p(x)p(y)}[e^{f(x,y)}]}$} \\\hline
         NWJ~\citep{nguyen2010nwj} & \multicolumn{2}{l}{$\mathcal{L}_\text{NWJ}(f(x,y))=\hat{I}_\text{NWJ}(X;Y)=\mathbb{E}_{p(x,y)}[f(x,y)]-e^{-1}\mathbb{E}_{p(x)p(y)}[e^{f(x,y)}]$} \\\hline
         InfoNCE~\citep{chen2018InfoNCE} & \multicolumn{2}{l}{$\mathcal{L}_\text{InfoNCE}(f(x,y))=\hat{I}_\text{InfoNCE}(X;Y)=\mathbb{E}_{p^K(x,y)}\left[\frac{1}{K}\Sigma_{i=1}^{K}\log{\frac{f(x_i,y_i)}{\frac{1}{K}\Sigma_{j=1}^{K}{f(x_i,y_j)}}}\right]$} \\\hline
         JS~\citep{poole2019variational} & $\mathbb{E}_{p(x,y)}\left[-\text{Softplus}(-f(x,y))\right]-\mathbb{E}_{p(x)p(y)}\left[\text{Softplus}(f(x,y)\right]$ & $\hat{I}_\text{NWJ}(X;Y)$ \\\hline 
         MINE~\citep{belghazi2018mine} & $\mathbb{E}_{p(x,y)}[f(x,y)]-\frac{\mathbb{E}_{p(x)p(y)}[e^{f(x,y)}]}{\text{ExponentialMovingAverage}(\mathbb{E}_{p(x)p(y)}[e^{f(x,y)}])}$ & $\hat{I}_\text{DV}(X;Y)$ \\ \hline
         SMILE-$\tau$~\citep{song2019understanding} & $\mathcal{L}_\text{JS}(f(x,y))$ & $\mathbb{E}_{p(x,y)}[f(x,y)]-\log{\mathbb{E}_{p(x)p(y)}[\text{clip}(e^{f(x,y)}, e^{-\tau}, e^{\tau})]}$ \\\bottomrule
    \end{tabular}
}
\vspace{-0.35cm}
\label{tab:bounds}
\end{table*}

Variational MI estimators are based on two steps. First, an analytical bound is derived where the bound is based on a critic function $f(x,y)$. Second, the bound is made tight by optimizing for a supremum or an infimum over $f(x,y)$. In recent works, deep neural networks have been used to model the critic function. 
When a proper loss function is chosen and the learning of $f(x,y)$ is successful, the variational estimations have been shown to be accurate for Gaussian datasets~\citep{belghazi2018mine,poole2019variational,song2019understanding,czyz2023beyond}. 

\begin{definition}[Variational MI estimators~\citep{poole2019variational}]
    \label{def:vbmi}
    Let $X$, $Y$ be two random variables taking values in $\mathcal{X}$, $\mathcal{Y}$, and $\mathcal{D}=\left\{ (x_i, y_i) \right\}_{i=1}^{N}\sim X, Y$ denote the set of samples drawn from a joint distribution over $\mathcal{X}$ and $\mathcal{Y}$. The variational bounds of $I(X;Y)$ are formulated as:
    \begin{equation*}
        I(X;Y) \ge \hat{I}(X;Y) = 1 + \mathbb{E}_{p(x,y)}\left[ \log{\frac{e^{f(x,y)}}{a(y)}} \right] - \mathbb{E}_{p(x)p(y)}\left[ \frac{e^{f(x,y)}}{a(y)} \right]
    \end{equation*}
    where $a(y)>0$ is any value or function of $y$. A variety of MI estimators are defined by adopting different $a(y)$. For example, $a(y)=e$ (constant) corresponding to $\hat{I}_\text{NWJ}(X;Y)$~\citep{nguyen2010nwj} (also known as $f$-GAN KL~\citep{nowozin2016fgan} and MINE-$f$~\citep{belghazi2018mine}) and $a(y)=\frac{1}{K}\sum_{i=1}^{K}e^{f(x_i,y)}$ corresponding to $\hat{I}_\text{InfoNCE}(X;Y)$~\citep{oord2018cpc}. 
\end{definition}

For a neural MI estimator, a DNN is used to model the \textit{critic} function $f(x,y)$, and there are two associated steps. The first is the optimization (or training) step where the DNN parameters are learned. The second is the estimation step where the actual MI values are inferred with the optimized DNN. Variational MI estimators, such as DV~\citep{donsker1983asymptotic}, NWJ~\citep{nguyen2010nwj}, and InfoNCE~\citep{oord2018cpc}, use a single loss function for both optimization and estimation, and the loss function corresponds to the theoretical MI bound in use. Other variational MI estimators, such as JS~\citep{nowozin2016fgan}, MINE~\citep{belghazi2018mine}, and SMILE~\citep{song2019understanding}, adopt small modifications in either optimization or estimation to improve the robustness or accuracy of the estimator. The most popular variational MI estimators are summarized in Table~\ref{tab:bounds}.

Common choices for the critic function $f(x,y)$ include 
(1) the inner product critic $f_\text{inner}(x_i,y_j)=x_i^Ty_j$, 
(2) bilinear critic $f_\text{bi}(x_i,y_j)=x_i^TWy_j$ where $W$ is trainable, (3) separable critic $f_\text{sep}(x_i,y_j)=f_1(x_i)^Tf_2(y_j)$, and (4) joint critic $f_\text{joint}(x_i,y_j)=f_3([x_i,y_j])$. Here $f_1$, $f_2$, and $f_3$ are typically shallow MLPs and they model the relationship between all pairs of $(x_i, y_j)$ $\forall i,j \in [1, K]$.

\section{Related Works}

Despite its theoretical validity, MI estimators present a few disadvantages because we typically have access to samples, but not to the underlying distribution functions~\citep{poole2019variational,song2019understanding,paninski2003estimation,mcallester2020formal}.
Most estimators exhibit sub-optimal performance, particularly when batch size $K$ is small and true MI is large. For instance, InfoNCE can result in a high bias because it is upper bounded by $\log{K}$~\citep{oord2018cpc}. \citet{mcallester2020formal} noted that any distribution-free high-confidence lower bound on MI cannot be larger than $\mathcal{O}(\log{K})$. In contrast, most estimators result in a variance that can increase exponentially with the increase in the true MI value~\citep{poole2019variational,song2019understanding,xu2020vinfo}. 
While these findings are enlightening, the limitations of estimators have been assessed mostly for the Gaussian benchmarks only instead of for the real-world datasets.

In efforts to evaluate MI estimators beyond Gaussian datasets, \citet{song2019understanding} proposed self-consistency tests using MNIST and CIFAR-10 datasets. 
They found that all estimators were not accurate for images. However, the analysis can be highly misleading because they evaluated based on the approximated metrics instead of the true MI. 
\citet{czyz2023beyond} suggested a non-Gaussian benchmark with tractable distributions (\emph{e.g.}, multivariate student, uniform distribution) with the formulated mapping functions (\emph{e.g.}, spiral transformation), enabling access to true MI. They considered non-Gaussian datasets that have tractable distribution functions, but the results are not representative of unstructured datasets, such as images and texts. In this study, we introduce a comprehensive method for evaluating MI estimators with no constraint on the type of data domains. In particular, we present a benchmark suite that allows assessment and manipulation of the true MI for images and sentence embeddings.
\vspace{-0.2cm}
\section{Proposed Method and Benchmark Suite}
\label{sec:dataset}
\vspace{-0.2cm}

We propose a comprehensive method for evaluating neural MI estimators across various data domains. 
While our method is applicable to any choice of data domain, we focus on three types of data domains in our benchmark suite: (1) a multivariate Gaussian dataset ($\mathcal{D}_\text{Gaussian}$), corresponding to the most common case for evaluating MI estimators in the existing works~\citep{poole2019variational,song2019understanding,mcallester2020formal}; (2) an image dataset consisting of digits ($\mathcal{D}_\text{vision}$), as an example of vision tasks; and (3) a sentence embedding dataset consisting of BERT embeddings of movie review datasets ($\mathcal{D}_\text{NLP}$), as an example of NLP tasks.
We first consider the general formulation for Gaussian dataset, define three factors that can affect MI values, introduce same-class sampling, propose a method for generating unstructured datasets with adjustable true MI values, and finally explain how binary symmetric channel trick can be employed for manipulating true MI to any non-integer value.


\vspace{-0.2cm}
\subsection{General formulation for Gaussian dataset}
\vspace{-0.2cm}
Consider a dataset with $K$ pairs of samples, where 
$(x_i,y_i)$ is sampled from a joint distribution $p(x,y)$. An MI estimator utilizes the dataset as its input and evaluates the estimated mutual information $\hat{I}(X;Y)$. If the estimation is accurate, $\hat{I}(X;Y)$ should be close to the true mutual information $I(X;Y)$.
In previous studies, a Gaussian dataset associated with a multivariate Gaussian model was utilized to assess neural MI estimators~\citep{belghazi2018mine,poole2019variational,song2019understanding,song2020mlcpc,cheng2020club}. The Gaussian dataset has Gaussian samples with zero mean and a component-wise correlation of $\rho$ between $X$ and $Y$. 
The true MI is known and can be expressed analytically as $I(X;Y)=-\frac{d_g}{2}\log{(1-\rho^2)}$,
where $x\in\mathbb{R}^{d_g}$ and $y\in\mathbb{R}^{d_g}$.


\vspace{-0.2cm}
\subsection{Definitions of \texorpdfstring{\MakeLowercase{$d_s$, $d_r$}}{ds, dr}, and \texorpdfstring{\MakeLowercase{$Z$}}{Z}}
\label{subsec:dataset:def}
\vspace{-0.2cm}

In the benchmark suite, we design and focus on three essential factors that can affect mutual information $I(X;Y)$, especially for unstructured datasets. 
For random variables $X$ and $Y$ with a joint distribution $p(x,y)$, they can be defined as follows.

\begin{definition}[Number of information sources $d_s$]
\label{def:1}
    $d_s$ is the number of independent scalar random variables used to form the mutually shared information between $X$ and $Y$.
\end{definition}
\begin{definition}[Representation dimension $d_r$]
\label{def:2}
    $d_r$ is the size of the observational data. When $X$ and $Y$ are of the same size, it is the length of the vector formed by flattening either $X$ or $Y$.
\end{definition}
\begin{definition}[Nuisance $Z$]
\label{def:3}
    Nuisance to a random variable $X$ is defined as an equal-size random variable $Z$ sharing no information with $X$. Mathematically, $Z$ satisfies $I(X; Z)=0$. Nuisance to $(X,Y)$ can be defined similarly where $Z$ is of the same size as $(X,Y)$ and $I(X, Y; Z)=0$.
\end{definition}

As an example, consider the Gaussian dataset. Its number of information sources $d_s$ is equal to $d_g$, its representation dimension $d_r$ is equal to $d_g$, and the dataset contains no nuisance. 
The effects of three factors on neural MI estimators will be analyzed in Section~\ref{sec:practice}.


\vspace{-0.2cm}
\subsection{Theoretical background: same-class sampling for positive pairing}
\vspace{-0.2cm}
\label{subsec:sameclasssampling}

Same-class sampling for positive pairing was proposed in~\citet{lee2023towards}. The key idea is to allow only the class information to be shared between two random variables $X$ and $Y$, such that the true MI can be proven to be the same as the entropy of class variable $C$, \emph{i.e.}, $I(X;Y)=H(C)$. The proofs require mild assumptions of either a lower bound estimate of MI being equal to $H(C)$ or the existence of an error-free decoder (Theorem 3.1 and 3.2 in \citet{lee2023towards}). In \citet{lee2023towards}, the first mild assumption was shown to be closely satisfied for commonly used image datasets, through extensive empirical evaluations. The second one essentially implies that the true MI is equal to $H(C)$ when the class information is easily decodable. An example is MNIST dataset whose digit information as the class label is known to be easily decodable. 
Similarly, for NLP datasets, sentence embeddings of the IMDB dataset~\citep{imdb} can be made easily decodable by fine-tuning with the class label $C$. We carefully construct our unstructured datasets such that we can take advantage of the theoretical results. Once we can employ $I(X;Y)=H(C)$, the calculation of $H(C)$ can be made trivial by choosing uniformly distributed class labels. Overall, same-class sampling makes it possible to access the true MI values even for unstructured datasets. For convenience, the theorems and proofs are provided in Supplementary~\ref{appendix:Theorems_and_poofs_for_sec.4.3}.

\vspace{-0.2cm}
\subsection{Generating datasets with adjustable true MI values}
\vspace{-0.2cm}

\begin{figure*}[tb!]
    \centering
    \includegraphics[width=\textwidth]{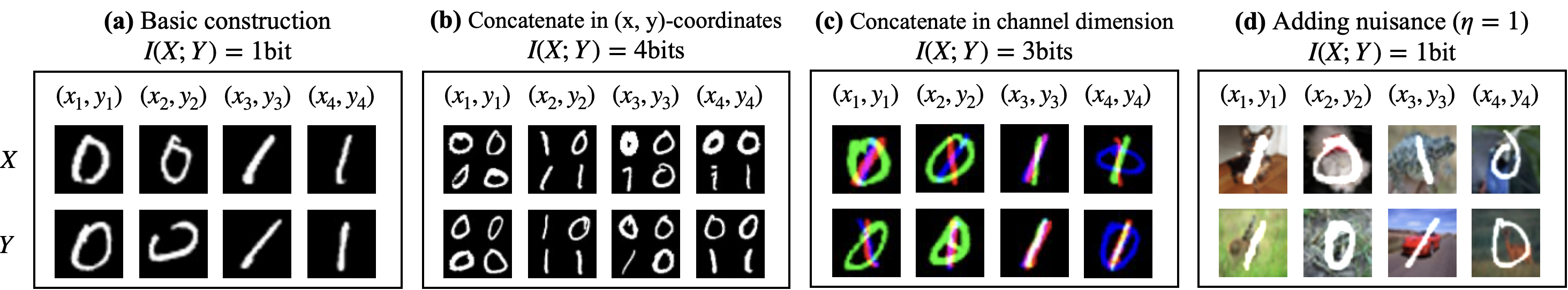}
    \caption{
    Four image examples of generating datasets with known true MI values. $X$ and $Y$ consist of random images drawn from the MNIST dataset. 
    (a) Basic construction: only the images of class 0 or 1 are considered and $X$ and $Y$ are sampled to share the class information. By choosing $C$ to be either 0 or 1 with probability $0.5$, $H(C)$ becomes 1 and therefore $I(X;Y)=H(C)=1$.
    (b) Combining four independent images in 2-D to form a single image: $I(X;Y)=4$ because $H(C)=H_\text{left,up}+H_\text{left,down}+H_\text{right,up}+H_\text{right,down}=4$. 
    (c) Combining three independent images in channel dimension to form a color image: $I(X;Y)=3$ because $H(C)=H_\text{green}+H_\text{red}+H_\text{blue}=3$. 
    (d) Adding nuisance: an independently chosen background image from CIFAR-10~\citep{cifar} is inserted as nuisance. Because the nuisance is independently chosen for $X$ and $Y$, they do not affect the true MI~\citep{lee2023towards}. Therefore, $I(X;Y)=1$.
    }
    \vspace{-0.2cm}
    \label{fig:dataset}
\end{figure*}

By utilizing Theorem 3.1 and 3.2 in \citet{lee2023towards}, it becomes possible to access the true MI of an unstructured dataset by drawing the positive pairs from a joint distribution $p(x,y)$ where only the class information $C$ is shared by $X$ and $Y$.
We first consider a binary random variable $C$ with $p(0)=p(1)=0.5$. We can design a simple stochastic function that maps $C$ to $X$, where $X$ is an image or sentence embedding. In our benchmark suite, to make use of the error-free classification function, we choose a dataset that easily achieves perfect classification accuracy with a simple classifier (e.g., 1-layer MLP). 
We adopt the MNIST dataset~\citep{deng2012mnist} for $\mathcal{D}_\text{vision}$ and BERT~\citep{devlin2018bert} fine-tuned sentence embeddings of the IMDB dataset~\citep{imdb} for $\mathcal{D}_\text{NLP}$. 
In our implementation, $x$ becomes a sample from $\mathcal{D}$ of class $0$ when $c=0$, and a sample from $\mathcal{D}$ of class $1$ when $c=1$. We design a mapping function from $C$ to $Y$ where a different image or text of same class is drawn. For this basic construction, it can be shown that $I(X;Y)=H(C)=1$ bit. An image example is shown in Figure~\ref{fig:dataset}a. 

To construct a dataset with larger MI, two straightforward approaches can be used. In Figure~\ref{fig:dataset}b, we combine four samples of Figure~\ref{fig:dataset}a to create an image that is four times larger, which means $I(X;Y)=4$. In Figure~\ref{fig:dataset}c, we stack three pairs of samples from Figure~\ref{fig:dataset}a and map them to RGB; hence, $I(X;Y)=3$. 
We can adopt flexibly use other stratagems to generate a dataset that has a specific value of true MI.
Similarly, we generate the text dataset by concatenating the embedding vectors in 1D.

For images, we can insert random samples from other datasets as nuisance to $X$ and $Y$ to make the dataset more realistic without affecting the true MI value, as shown in Figure~\ref{fig:dataset}d. 
Because the source images remain on top without any occlusion, and there is no fixed relationship between the background chosen for $X$ and the background chosen for $Y$, the nuisance $Z$ does not affect the true $I(X;Y)$. 
Although strong nuisances might make some samples difficult to recognize the information sources $C$, the true MI is based on the overall distribution, not a few outlier samples. Therefore, the presence of nuisances does not affect the true MI. Further discussion will be provided in Section~\ref{subsec:nuisance}.
While it is possible to introduce nuisance to any dataset based on the formal definition in Definition~\ref{def:3}, we applied nuisance only to image datasets in this study, as text datasets already follow practical natural language distributions.

For $\mathcal{D}_\text{vision}$ and $\mathcal{D}_\text{NLP}$, it is trivial to identify the number of information sources $d_s$ and the representation dimension $d_r$. The number of information sources $d_s$ is always equal to $I(X;Y)$. For instance, Figure~\ref{fig:dataset}b has $d_s=4$. 
The representation dimension $d_r$ is a design parameter. As the default, we set $d_r=64^2$ for $\mathcal{D}_\text{vision}$ and $d_r=768\times 10$ for $\mathcal{D}_\text{NLP}$.

\vspace{-0.2cm}
\subsection{Manipulating MI to non-integer values: binary symmetric channel}
\label{subsec:bsc}
\vspace{-0.2cm}

To manipulate the true MI and construct a dataset with a non-integer MI value, we adopt the concept of binary symmetric channel (BSC)~\citep{cover1999elements}. BSC is a simple and well known form of noisy communication channel in information theory, and we utilize it for scaling down the true MI value in a fully controlled manner. 
With BSC, $X$ is always consistent with the class variable $C$ but $Y$ is noisy where it is different from $C$ with a crossover probability of $\beta$. 
Then, the true MI value can be controlled by adjusting $\beta$ between 0 and 0.5. 

\begin{theorem}[Manipulating MI to be non-integer]
    When the information source $C$ is transmitted perfectly to $X$, while it is transmitted to $Y$ over a binary symmetric channel (BSC) with a crossover probability $\beta \in [0, 0.5]$, the mutual information $I(X;Y)$ between $X$ and $Y$ is determined as follows.
    \begin{align}
    I(X;Y) &= H(C)\times (1-H(\beta))   
    \end{align}
    \label{theorem:bsc}
    \vspace{-0.5cm}
\end{theorem}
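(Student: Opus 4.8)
The plan is to split the argument into two parts: first reduce $I(X;Y)$ to the mutual information between the transmitted source $C$ and the (noisy) source $C'$ that is actually used to generate $Y$, and then evaluate this quantity by the standard binary-symmetric-channel computation.

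First I would fix the generative model. Write $C=(C_1,\dots,C_{d_s})$ for the $d_s$ independent uniform binary information sources, so $H(C)=d_s$; the single-source case of the theorem statement is $d_s=1$. The construction delivers $C$ to $X$ noiselessly, sends each bit through an independent BSC with crossover probability $\beta$ to obtain $C'=(C'_1,\dots,C'_{d_s})$, and then draws $Y$ as a same-class sample of class $C'$. This makes $X - C - C' - Y$ a Markov chain, since $X$ depends on the other variables only through $C$, the pair $(C,C')$ is a BSC input-output pair, and $Y$ depends on the other variables only through $C'$.

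Second, I would show $I(X;Y)=I(C;C')$. The bound $I(X;Y)\le I(C;C')$ is the data-processing inequality along $X - C - C' - Y$ (e.g.\ $I(X;Y)\le I(X;C')\le I(C;C')$). For the reverse bound I would invoke the error-free-decoder assumption underlying same-class sampling (Theorems 3.1 and 3.2 of \citet{lee2023towards}, discussed in Section~\ref{subsec:sameclasssampling}): because the chosen datasets admit a perfect classifier, there are deterministic maps with $g(X)=C$ and $h(Y)=C'$ almost surely, so $I(X;Y)\ge I(g(X);h(Y))=I(C;C')$. Combining the two inequalities yields $I(X;Y)=I(C;C')$. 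Third, I would compute $I(C;C')$: since the $d_s$ sources and the $d_s$ channels are mutually independent, the pairs $(C_k,C'_k)$ are mutually independent, hence $I(C;C')=\sum_{k=1}^{d_s} I(C_k;C'_k)$; each term is a single BSC use with uniform input, so $C'_k$ is uniform and $I(C_k;C'_k)=H(C'_k)-H(C'_k\mid C_k)=1-H(\beta)$. Therefore $I(C;C')=d_s\,(1-H(\beta))=H(C)\,(1-H(\beta))$, as claimed.

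The step I expect to be the main obstacle is the reduction $I(X;Y)=I(C;C')$: the data-processing direction is routine once the Markov chain is written down, but the reverse direction is precisely where the ``mild assumption'' of an error-free decoder enters, and a little care is needed to argue that the class label of the composite observation (a concatenation/stacking of several same-class samples, possibly with an independently sampled nuisance added) is still decodable, so that the deterministic maps $g$ and $h$ genuinely exist. The subsequent BSC mutual-information computation is then standard information theory.
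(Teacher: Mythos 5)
Your proposal is correct and follows essentially the same route as the paper's proof: reduce $I(X;Y)$ to the label-level quantity via the Markov structure of same-class sampling (the paper writes $I(X;Y)=I(C;Y)$ by ``Markov equivalence,'' implicitly using the same error-free decodability you invoke), then perform the standard BSC computation giving $1-H(\beta)$ per source and sum over the independent sources. Your version merely makes explicit the two-sided argument (data-processing inequality plus decoders $g,h$) that the paper leaves implicit, which is a fair tightening rather than a different approach.
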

$H(\beta)$ refers to the entropy of a binary variable with the crossover probability $\beta$~\citep{cover1999elements}, and is given by $H(\beta)=-\beta\log{\beta}-(1-\beta)\log{(1-\beta)}$.
When $\beta=0$, there is no information loss during transmission. Thus, $H(\beta)=0$ and $I(X;Y)=H(C)$. When $\beta=0.5$, the channel is completely noisy, and $X$ and $Y$ do not share any information. Thus, $H(\beta)=1$ and $I(X;Y)=0$. Any MI value in between can be implemented by choosing an appropriate $\beta$. The proof is provided in Supplementary~\ref{appendix:proof_for_bsc}.  
\vspace{-0.2cm}
\section{Empirical investigations}
\label{sec:practice}
\vspace{-0.2cm}

In this section, we investigate seven key aspects that can affect the performance of MI estimators. All investigations are based on our benchmark suite and the empirical findings are reported together. 
As evaluation metrics, we calculate bias, variance, mean squared error (MSE), and the estimated MI (defined as the average of the estimations) during the training of the critic function.
All experiments were conducted on a single NVIDIA GeForce RTX 3090. 
Detailed experimental setups and raw results are available in Supplementary~\ref{sec:appendix:setups} and \ref{sec:appendix:results}, respectively.

\subsection{Choice of critic architecture: superiority of joint critic for unstructured datasets}
\label{subsec:critic_architecture}

\citet{poole2019variational} observed that using a joint critic outperforms a separable critic for NWJ and JS estimators, while the InfoNCE estimator demonstrating robustness to the choice of critic architecture. For SMILE~\citep{song2019understanding} estimator, a joint critic surpassed a separable critic in basic Gaussian setups, but this trend reversed in more complex setups of the same dataset.
This section aims to extend these insights into the vision and NLP domains, providing guidance on selecting critic architectures across diverse data contexts.

\begin{figure*}[tbh!]
    \begin{tabular}{C{1.25cm}  L{6cm}}
        \small{Gaussian $d_r=10$}  & \includegraphics[width=0.86\textwidth]{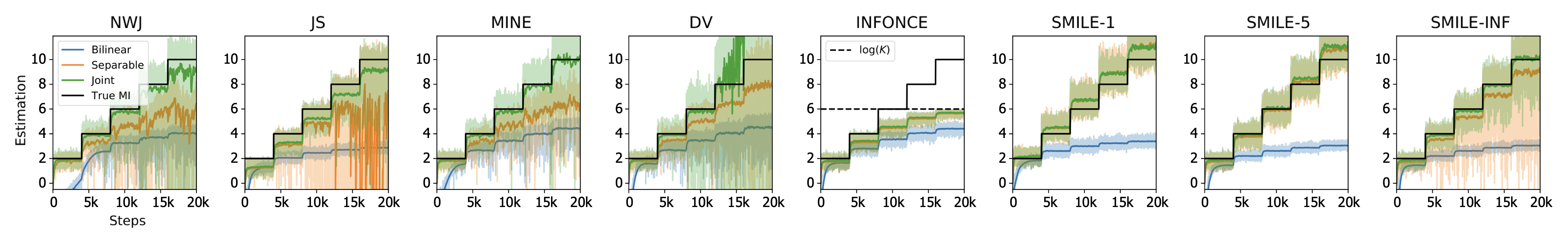}
    \end{tabular} \\
    \begin{tabular}{C{1.25cm}  L{6cm}}
        \small{Images $d_r=64^2$} & \includegraphics[width=0.86\textwidth]{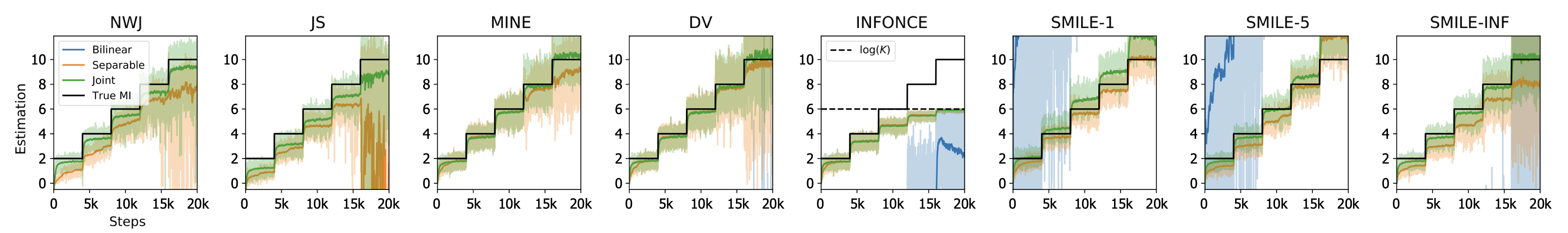}
    \end{tabular} \vspace{-0.2cm}
    \caption{Estimation results for four different benchmarks with $d_s=10$. Following the experimental setup of \cite{poole2019variational}, we change the true MI values stepwise and the other hyperparameters are fixed.
    }
    \vspace{-0.2cm}
    \label{fig:critic-architecture}
\end{figure*}

Figure~\ref{fig:critic-architecture} and Figure~\ref{appendix:fig:critic-architecture} in Supplementary~\ref{sec:appendix:results} present the results of our experiments, which include scenarios where variables share the same domain (image and image, text and text) as well as cases involving cross-domain pairs (image and text). 
Our key observations are: (1) The joint critic consistently provides reliable estimations across all estimators and data domains; (2) The bilinear critic, while providing stable yet biased estimations for Gaussian datasets, is notably inaccurate in unstructured datasets; (3) The separable critic performs well on unstructured datasets while it often exhibits large variance for Gaussian cases; (4) Contrary to the theoretical proofs and empirical findings of \citet{song2019understanding} on the high variance of the DV estimator in Gaussian datasets, we observe stable performance in unstructured datasets even with large MI values. Notably, no significant advantage of SMILE over DV (or MINE) was observed for both images and sentence embeddings. These findings underscore the pronounced differences in analyzing MI estimators between Gaussian and unstructured datasets, highlighting the robustness of the joint critic in diverse contexts.

\subsection{Choice of critic capacity: larger capacity does not ensure a higher estimation accuracy}
\label{subsec:critic_capacity}

\begin{wraptable}{r}{85mm}
    \centering
    \vspace{-0.35cm}
    \caption{Estimation bias/variance/MSE when the true MI is 2 bits. All estimators showed the same trends, so we report the result of the case of SMILE-inf with the joint critic.}
    \label{tab:critic_depth}
    \resizebox{0.6\columnwidth}{!}{%
    \begin{tabular}{cccccccccc}
    \toprule
        Dataset & \multicolumn{3}{c}{Gaussian} & \multicolumn{3}{c}{Images} & \multicolumn{3}{c}{Sentence Embeddings} \\ \cmidrule(lr){1-1}\cmidrule(lr){2-4}\cmidrule(lr){5-7}\cmidrule(lr){8-10}
        Critic depth & Bias & Variance & MSE & Bias & Variance & MSE & Bias & Variance & MSE \\\midrule
        1 & 0.136 & 0.106 & 0.125 & 0.293 & 0.108 & 0.194 & 0.300 & 0.083 & 0.173 \\
        2 & 0.145 & 0.096 & 0.117 & 0.297 & 0.101 & 0.189 & 0.273 & 0.078 & 0.153 \\
        3 & 0.142 & 0.094 & 0.114 & 0.302  & 0.102 & 0.194 & 0.269 & 0.077 & 0.150 \\
        4 & 0.140 & 0.095 & 0.114 & 0.302 & 0.103 & 0.195 & 0.270 & 0.078 & 0.151 \\
        5 & 0.145 & 0.096 & 0.117  & 0.311  & 0.105 & 0.202 & 0.278 & 0.081 & 0.158 \\
    \bottomrule
    \end{tabular}
    }
\end{wraptable}

While joint critics often yield the best estimation accuracy in various scenarios, this subsection delves into whether increasing critic capacity could further enhance estimation accuracy. A prior study \citep{tschannen2019mutual} posited that larger critic capacities should correlate with more precise estimations. To assess critic capacity, we manipulated the depth of the critic network, with specific results for a true MI of 2 bits outlined in Table~\ref{tab:critic_depth}. (Full results are available in Supplementary~\ref{sup:sec:critic_depth}.)

Contrary to the assertion of \citet{tschannen2019mutual}, our findings reveal an unexpected trend: no discernible positive correlation between critic capacity and estimation accuracy across any data domain. Specifically, the Pearson's correlation coefficient $\rho$ between critic depth and estimation accuracy was $-0.007$ for $\mathcal{D}_\text{Gaussian}$, $0.059$ for $\mathcal{D}_\text{vision}$, and $-0.001$ for $\mathcal{D}_\text{NLP}$. These findings suggest that an increase in critic capacity does not inherently improve estimation accuracy and may even be counterproductive, contradicting previous assumptions and underscoring the need for a nuanced approach to enhancing critic architectures.

Based on our findings, we fixed the critic architecture as the joint critic of 2-layer MLP for all subsequent sections of this study.

\subsection{Choice of MI estimator: no universal winner exists across the three data domains}

\begin{wraptable}{r}{83mm}
    \centering
    \vspace{-0.4cm}
    \caption{Estimation error~(MSE) with the joint critic. For each dataset and true MI, the best cases are marked in \textbf{bold}.
    }
    \label{tab:estimator}
    \resizebox{0.55\columnwidth}{!}{%
    \begin{tabular}{ccccccccc}
    \toprule
        Dataset & True MI (bits) & NWJ & DV & InfoNCE & MINE & SMILE-1 & SMILE-5 & SMILE-inf \\ \midrule
        Gaussian & 2 & 0.142 & 0.117 & 0.121 & 0.116 & 0.139 & \textbf{0.115} & 0.117 \\
        Gaussian & 4 & 0.214 & 0.225 & 0.418 & 0.212 & 0.423 & \textbf{0.160} & 0.174 \\
        Gaussian & 6 & 0.557 & 1.451 & 2.090 & 0.452 & 0.789 & \textbf{0.258} & 0.324 \\
        Gaussian & 8 & 4.464 & 456.420 & 7.235 & 0.897 & 1.087 & \textbf{0.596} & \textbf{0.596} \\
        Gaussian & 10 & 8.889 & 1e+07 & 18.400 & 1.973 & 1.443 & 1.658 & \textbf{1.262} \\\midrule
        Images & 2 & 0.288 & 0.175 & 0.179 & 0.217 & \textbf{0.142} & 0.191 & 0.189 \\
        Images & 4 & 0.357 & 0.233 & 0.479 & 0.250 & 0.338 & \textbf{0.229} & 0.239 \\
        Images & 6 & 0.577 & 0.366 & 1.912 & 0.340 & 0.854 & \textbf{0.210} & 0.372 \\
        Images & 8 & 1.058 & 0.787 & 6.457 & \textbf{0.602} & 1.278 & 0.659 & 0.694 \\
        Images & 10 & \textbf{1.580} & 9.529 & 16.742 & 3.249 & 4.197 & 8.987 & 4.899 \\\midrule
        Text & 2 & 0.150 & 0.143 & 0.162 & 0.147 & \textbf{0.097} & 0.155 & 0.153 \\
        Text & 4 & 0.336 & 0.312 & 0.641 & 0.335 & \textbf{0.234} & 0.301 & 0.328 \\
        Text & 6 & 0.668 & 0.596 & 2.390 & 0.625 & \textbf{0.305} & 0.328 & 0.606 \\
        Text & 8 & 1.309 & 1.092 & 7.288 & 1.108 & 0.297 & \textbf{0.274} & 1.159 \\
        Text & 10 & 2.659 & 2.345 & 17.757 & 2.027 & \textbf{0.319} & 0.749 & 2.476 \\\bottomrule
    \end{tabular}
    }
\end{wraptable}

Recent advancements have introduced more accurate MI estimators, with notable efforts highlighted in \citet{poole2019variational,song2019understanding,mcallester2020formal,cheng2020club}. Among these, the SMILE estimator has been acclaimed for efficiently reducing the estimation variance compared to other estimators, offering a more favorable bias-variance trade-off \citep{song2019understanding}. As evidenced in Table~\ref{tab:estimator}, the SMILE estimator exhibits a slight superiority over other estimators in Gaussian scenarios and a more pronounced advantage in NLP cases. However, in vision cases with large true MI values, the NWJ and MINE estimators demonstrate superior performance compared to SMILE. These findings indicate the absence of a universally optimal estimator, highlighting the necessity of context-specific selection for MI estimation across various data domains.

\subsection{Number of information sources (\texorpdfstring{\MakeLowercase{$d_s$}}{ds}): unstructured datasets outperform Gaussian in handling larger \texorpdfstring{\MakeLowercase{$d_s$}}{ds}}
\label{subsec:ds}

In this subsection, we explore the influence of the number of information sources ($d_s$), as previously defined in Section~\ref{subsec:dataset:def}, on the accuracy of MI estimation. We incrementally increase $d_s$ from 1 to 100, observing the effects on estimation accuracy.
As shown in Figure~\ref{fig:ds}, estimation accuracy deteriorates when $d_s$ becomes excessively large across all data domains.

\begin{wrapfigure}{r}{80mm}
    \centering
    \vspace{-0.2cm}
    \includegraphics[width=0.52\textwidth]{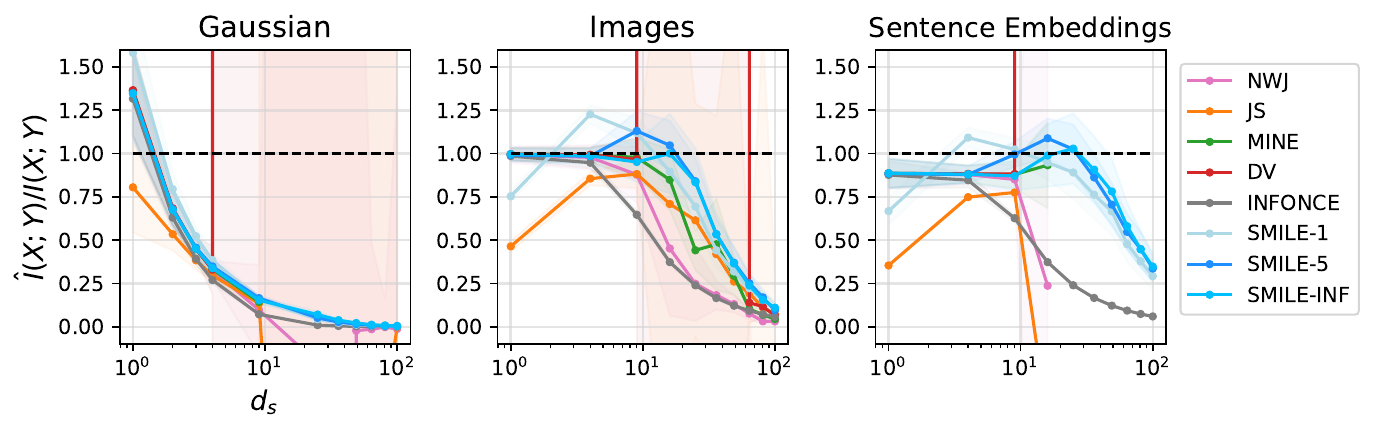}
    \caption{Estimation results varying $d_s$. Shades correspond to the standard deviation of the estimations.}
    \vspace{-0.3cm}
    \label{fig:ds}
\end{wrapfigure}

Interestingly, we observed domain-specific thresholds where MI estimators begin to falter: estimations become unreliable when $d_s$ exceeds 4 in the Gaussian case, 36 in the vision case, and 64 in the NLP case, approximately. This suggests that unstructured datasets, unlike the Gaussian datasets, allow for relatively accurate MI estimations with moderate $d_s$ values within the range of $[4, 36]$. Given that a uniformly distributed classification problem typically involves classes much less than 10M (significantly lower than $2^{36}$), these findings indicate that large $d_s$ values might not be a limiting factor in practical applications. 

\subsection{Representation dimension (\texorpdfstring{\MakeLowercase{$d_r$}}{dr}): it does not affect the estimation accuracy}
\label{subsec:dr}

\begin{wraptable}{r}{80mm}
    \centering
    \vspace{-0.3cm}
    \caption{Estimation results varying $d_r$ for images with $d_s=1$ and $I(X;Y)=1$.}
    \vspace{-0.2cm}
    \label{tab:dr}
    \resizebox{0.56\columnwidth}{!}{%
    \begin{tabular}{lcccccccc}
    \toprule
        $d_r$ & NWJ & JS & MINE & DV & InfoNCE & SMILE-1 & SMILE-5 & SMILE-inf \\\midrule
        100 & 0.993 & 0.464 & 0.996 & 0.996 & 0.984 & 0.753 & 0.990 & 0.995 \\
        400 & 0.993 & 0.465 & 0.997 & 0.997 & 0.985 & 0.754 & 0.992 & 0.996 \\
        2500 & 0.994 & 0.465 & 0.997 & 0.997 & 0.986 & 0.753 & 0.992 & 0.997 \\
        10000 & 0.994 & 0.464 & 0.997 & 0.997 & 0.985 & 0.753 & 0.991 & 0.996 \\ \bottomrule
    \end{tabular}
    }
    \vspace{-0.3cm}
\end{wraptable}

Real-world datasets can have any representation dimension while having a fixed number of information sources. For example, the image datasets in Figure~\ref{fig:dataset} can be represented in any reasonable dimension without compromising the semantic information. To analyze the MI estimation accuracy in these scenarios, we investigate a range of representation dimensions $d_r$ for a fixed number of information sources $d_s$ and the MI value $I(X;Y)$.
Specifically, we simply resize the images of size $64^2$ in Figure~\ref{fig:dataset} using a linear interpolation function to obtain images whose size ranges between $10^2$ and $100^2$ while keeping $d_s$ and $I(X;Y)$ fixed.

As shown in Table~\ref{tab:dr}, we observe that the representation dimension does not affect estimation accuracy for images. Even as $d_r$ increases to 10000, we found that almost all the estimators provide accurate estimations. In other words, the sparsity does not impact the MI estimation accuracy for images. These results can be attributed to the diverse and complex pixel patterns in images, which provide robust information even as $d_r$ increases, and to high redundancy, which ensures essential information is maintained, unlike typical structured datasets.

\subsection{Nuisance: MINE turns out to be relatively robust}
\label{subsec:nuisance}

Nuisance, integral to real-world datasets as defined in Section~\ref{subsec:dataset:def}, presents unique challenges in MI estimation. To quantitatively assess their influence on images, we place the digits in $x$ over the scaled background image $z\cdot\eta$ as depicted in Figure~\ref{fig:nuisance_ex}.
We varied the nuisance strength parameter $\eta$ from 0 to 1.
Note that introducing nuisance does not alter the true MI values, as class labels remain perfectly predictable when a large number of samples are given. This is the first attempt to investigate how the nuisance affects the estimation of MI.

\begin{figure}[b!]
    \centering
    \subfloat[]{\includegraphics[width=0.1\textwidth]{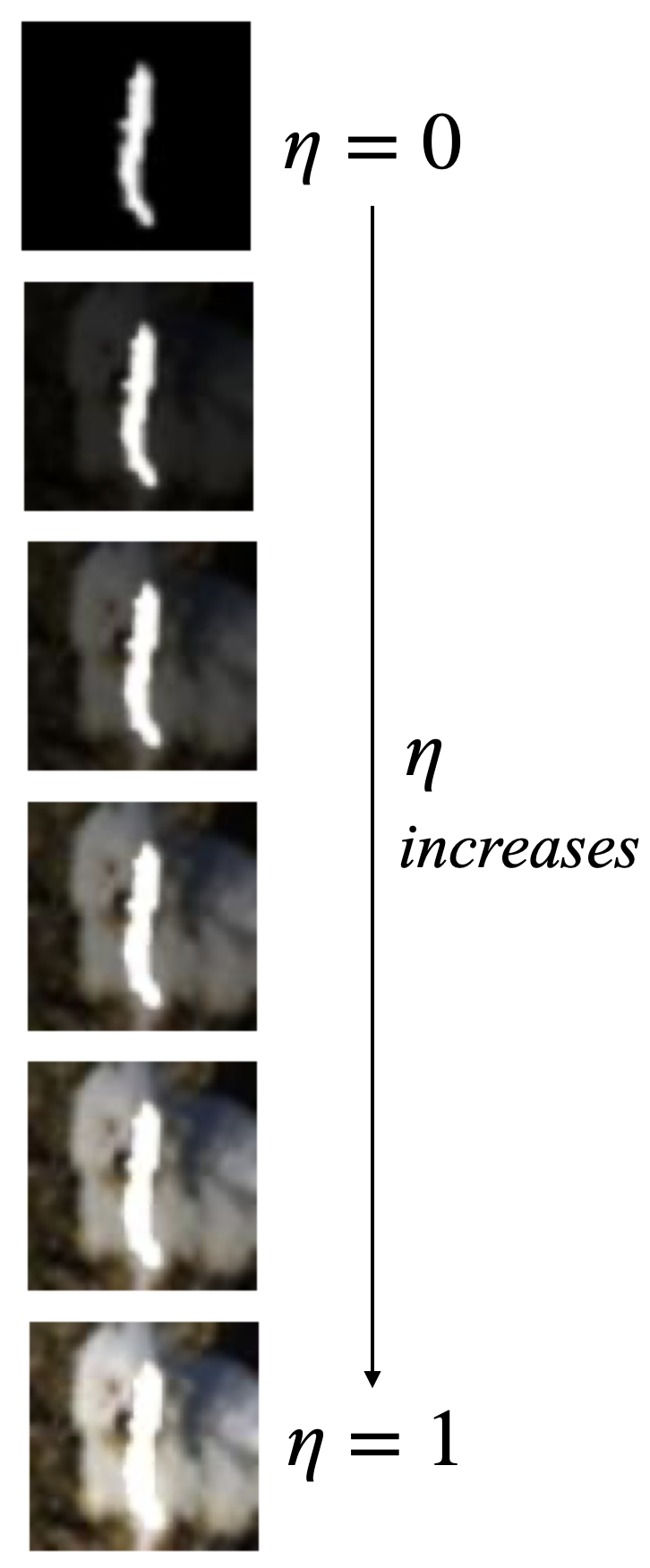}
    \label{fig:nuisance_ex}
    }\hspace{0.2cm}\vline\hspace{0.2cm}
    \subfloat[]{\includegraphics[width=0.3\textwidth]{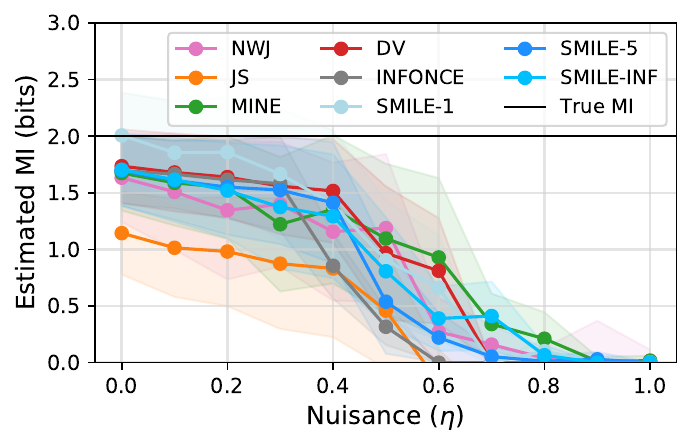}
    \label{fig:nuisance}}\hspace{0.2cm}\vline\hspace{0.2cm}
    \subfloat[]{\includegraphics[width=0.5\columnwidth]{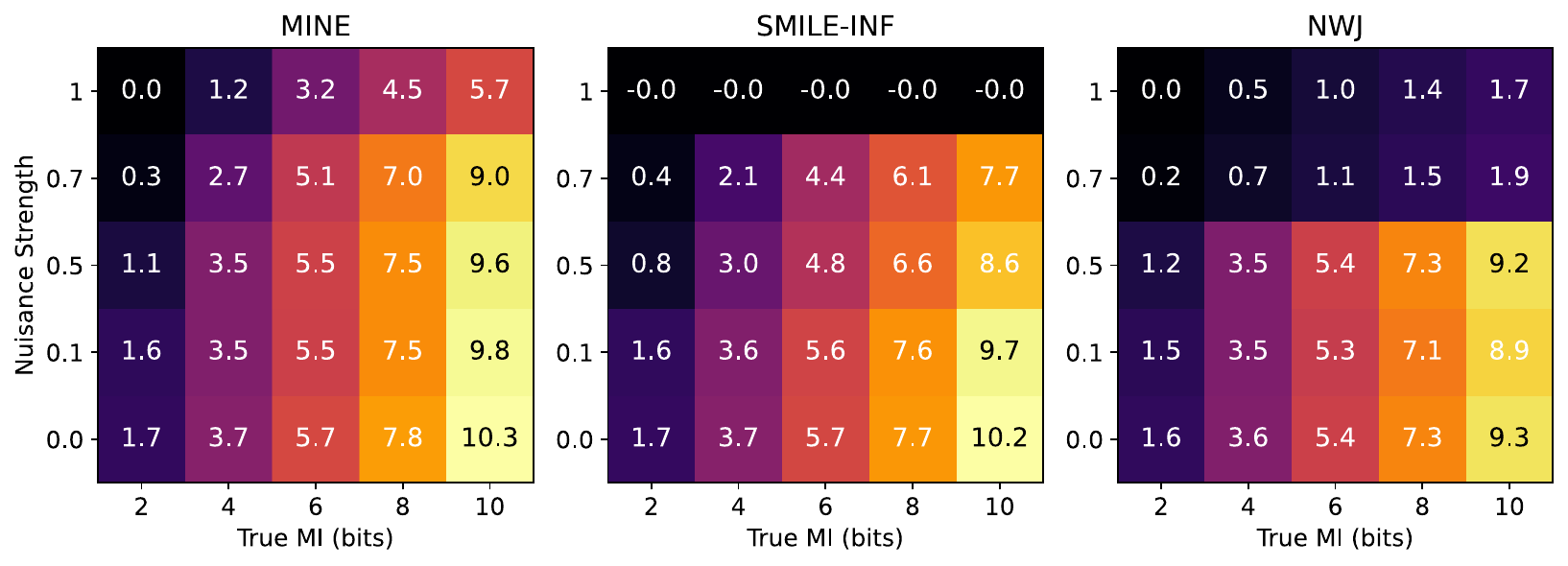}\label{fig:nuisance2}}
    \caption{(a) Example of inserting nuisance to $\mathcal{D}_\text{vision}$. (b) Estimation results when the true MI is 2 bits. (c) Estimation results with various values of nuisance strength and true MI for three best-performing estimators. True MI values are on the x-axis and the nuisance strength is on the y-axis. 
    }
\end{figure}

Our first analysis focused on a fixed true MI of 2 bits, with results detailed in Figure~\ref{fig:nuisance}. It was observed that estimation accuracy declines significantly with increased nuisance strength, particularly beyond a threshold of 0.4, across all estimators. Further investigation into larger MI values (Figure~\ref{fig:nuisance2}) reveals that while large $\eta$ adversely affects estimations for small MI values, MINE and SMILE-inf estimators provide reliable results under moderate nuisance (up to 0.7) and for larger MI values (over 6 bits). These tight estimations empirically support our theoretical background that $I(X;Y)=H(C)$ when the nuisance exists, as suggested by Theorem 3.1 in \citet{lee2023towards}. 
MINE uniquely offers nonzero estimations even in the presence of the largest nuisance ($\eta=1$).
High nuisance strength may complicate the learning process resulting in increased bias and variance. Consequently, all estimators fail to maintain accuracy under high nuisance conditions, highlighting the need for more robust methods to handle such scenarios.

\subsection{Network and layer dependency: estimation holds for invertible networks and upper layers}
\label{subsec:representation}

Our final investigation focuses on the accuracy of MI estimators in the context of deep representations (i.e., $I(g(X); g(Y))$, where $g$ represents a deep network) because of the prevalent interest in understanding dependencies between representations rather than raw inputs. While \citet{czyz2023beyond} highlighted concerns about the reliability of estimators in datasets with tractable distribution functions and under specific unrealistic transformations, our study expands the horizon by examining three invertible networks: MAF~\citep{papamakarios2017masked}, RealNVP~\citep{dinh2016density}, and i-RevNet~\citep{jacobsen2018revnet} for images and texts.
For $\mathcal{D}_\text{vision}$, we additionally investigate a non-invertible ResNet-50 network, a widely used non-invertible network, which is pre-trained on the MNIST dataset, to provide more relevant insights for practitioners. This allows us to better align with practical interests in MI estimation beyond invertible networks.
Remarkably, as demonstrated in Figure~\ref{fig:representation} in supplementary material, we found that estimation robustly persists for the representations of $\mathcal{D}_\text{vision}$ and $\mathcal{D}_\text{NLP}$ across various network architectures.

\begin{wrapfigure}{r}{80mm}
    \centering
    \vspace{-0.5cm}
    \subfloat[$I(X;Y)=4$bits]{
    \includegraphics[width=0.23\textwidth]{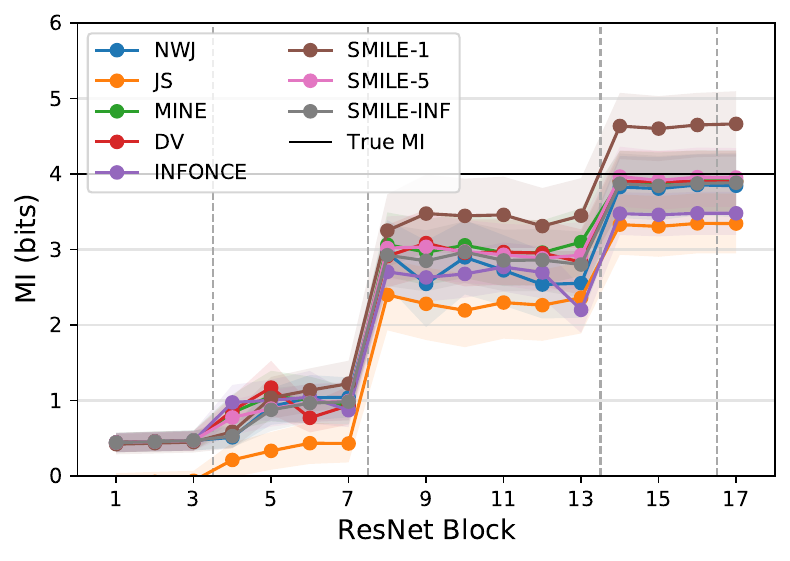}
    }
    \subfloat[$I(X;Y)=6$bits]{
    \includegraphics[width=0.23\textwidth]{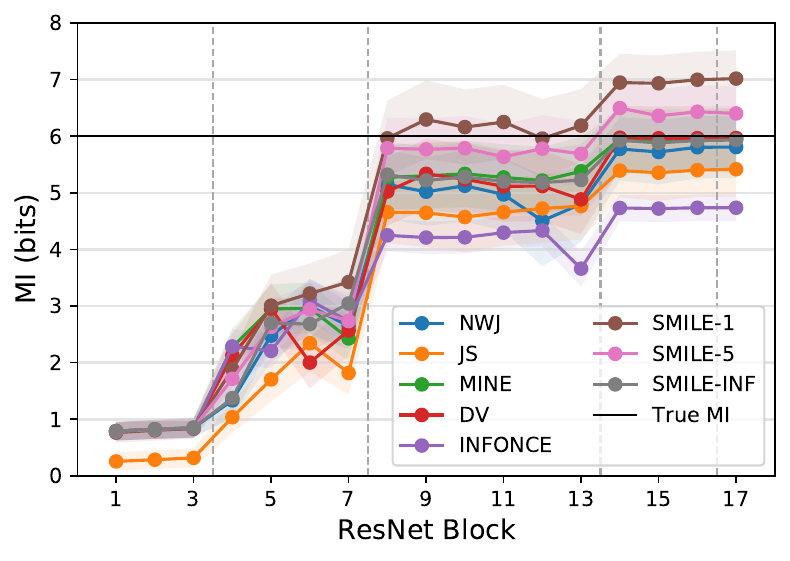}
    }
    \caption{Estimation results for hidden layers of ResNet-50. Dashed lines indicate boundaries between stages. For full results, see Supplementary~\ref{sec:supp:rep}.
    }
    \label{fig:nuisance-encoder-hidden}
\end{wrapfigure}

If deep representations are robust for estimating MI, should this hold across all layers? To address this question, we estimated MI for intermediate layers of ResNet-50 trained on $\mathcal{D}_\text{vision}$ without nuisance. Results are summarized in Figure~\ref{fig:nuisance-encoder-hidden}. 
According to the data processing inequality, lower-layer MI cannot be smaller than upper-layer MI. However, estimated MI values indicate the opposite. 
This discrepancy suggests that MI estimations at lower layers are less precise, whereas upper-layer representations yield more accurate estimations.
Interestingly, we observe the step-wise estimation results; the transition clearly occurs when the output size changes across all types of estimators.
It appears that upper layers might capture abstract, high-level features, potentially offering more meaningful information for MI estimation. In contrast, lower layers might contain more noise and less discriminative features, which could lead to poorer accuracy. 
\section{Discussion and Conclusion}
\label{sec:conclusion}

Quantifying complex dependency between variables is an essential topic in machine learning. In this realm, sample-based neural MI estimators have been the primary choice for many deep learning applications. The MI estimators have been directly used for improving downstream task performance or indirectly used for motivating learning method developments.
However, there has been hardly any attempt to evaluate the accuracy of these MI estimators over real-world datasets such as images and texts. 
In this study, we proposed a novel benchmark suite for evaluating neural MI estimators on unstructured datasets, where the underlying distribution functions are not accessible. 
Our findings reveal discrepancies in estimation accuracy between traditional Gaussian benchmarks and unstructured data scenarios, highlighting the limitations of Gaussian benchmarks in capturing the nuances of MI estimation in practical settings.
Notably, our findings on unstructured datasets demonstrate that MI estimators can yield remarkably accurate results, particularly in conjunction with deep representations, indicating their potential to continue driving advancements in deep learning research.
While our study does not cover the entire spectrum of real-world datasets, it signifies a substantial step forward in evaluating and understanding MI estimators beyond purely statistical datasets.
We hope that this benchmark suite not only offers a new standard for evaluating MI estimators
but also catalyzes further research, enriching our comprehension of MI across a diverse data domains.


\paragraph{Acknowledgement} This work was supported by the following grants funded by the Korea government: NRF (NRF-2020R1A2C2007139, NRF-2022R1A6A1A03063039) and IITP ([NO.2021-0-01343, Artificial Intelligence Graduate School Program (Seoul National University)], [No. RS-2023-00235293]).

\bibliographystyle{abbrvnat}
\bibliography{99_reference}

\newpage
\section*{Checklist}

  

\begin{enumerate}

\item For all authors...
\begin{enumerate}
  \item Do the main claims made in the abstract and introduction accurately reflect the paper's contributions and scope?
    \answerYes{See Section~\ref{sec:intro}.}
  \item Did you describe the limitations of your work?
    \answerYes{See Section~\ref{sec:conclusion}.}
  \item Did you discuss any potential negative societal impacts of your work?
    \answerYes{See Supplementary~\ref{appendix:impact}.}
  \item Have you read the ethics review guidelines and ensured that your paper conforms to them?
    \answerYes{}
\end{enumerate}

\item If you are including theoretical results...
\begin{enumerate}
  \item Did you state the full set of assumptions of all theoretical results?
    \answerYes{See Supplementary~\ref{sup:sec:proof}.}
	\item Did you include complete proofs of all theoretical results?
    \answerYes{See Supplementary~\ref{sup:sec:proof}.}
\end{enumerate}

\item If you ran experiments (e.g. for benchmarks)...
\begin{enumerate}
  \item Did you include the code, data, and instructions needed to reproduce the main experimental results (either in the supplemental material or as a URL)?
    \answerYes{We include detailed descriptions in supplementary materials and provide the benchmark URL.}
  \item Did you specify all the training details (e.g., data splits, hyperparameters, how they were chosen)?
    \answerYes{We include detailed descriptions in supplementary materials and provide the benchmark URL.}
	\item Did you report error bars (e.g., with respect to the random seed after running experiments multiple times)?
    \answerYes{We report the bias, variance, MSE and the average of predictions in the manuscript.}
	\item Did you include the total amount of compute and the type of resources used (e.g., type of GPUs, internal cluster, or cloud provider)?
    \answerYes{See Section~\ref{sec:practice}.}
\end{enumerate}

\item If you are using existing assets (e.g., code, data, models) or curating/releasing new assets...
\begin{enumerate}
  \item If your work uses existing assets, did you cite the creators?
    \answerYes{We cite the MNIST~\citep{deng2012mnist}, CIFAR-10~\citep{cifar}, and IMDB datasets~\citep{imdb}.}
  \item Did you mention the license of the assets?
    \answerYes{All data in this study are freely available for academic and non-commercial use.}
  \item Did you include any new assets either in the supplemental material or as a URL?
    \answerYes{It is available at GitHub URL.}
  \item Did you discuss whether and how consent was obtained from people whose data you're using/curating?
    \answerNA{}
  \item Did you discuss whether the data you are using/curating contains personally identifiable information or offensive content?
    \answerNA{}
\end{enumerate}

\item If you used crowdsourcing or conducted research with human subjects...
\begin{enumerate}
  \item Did you include the full text of instructions given to participants and screenshots, if applicable?
    \answerNA{}
  \item Did you describe any potential participant risks, with links to Institutional Review Board (IRB) approvals, if applicable?
    \answerNA{}
  \item Did you include the estimated hourly wage paid to participants and the total amount spent on participant compensation?
    \answerNA{}
\end{enumerate}

\end{enumerate}
\newpage
\appendix
\onecolumn
\appendix

\section{Broader Societal Impact Statement}
\label{appendix:impact}

This paper introduces a new benchmark suite for evaluating mutual information (MI) estimators, particularly in unstructured datasets like images and texts. The broader impacts of our work are significant in two folds. First, by providing a more realistic and challenging benchmark for MI estimators, this research can lead to the development of more accurate and robust estimation methods, especially in complex data scenarios. Second, improved MI estimation methods can benefit fields such as computer vision and natural language processing, where understanding intricate data relationships is crucial, thereby enhancing the efficiency and effectiveness of AI systems in practical applications. In essence, our research has the potential to influence various aspects of society through the improved understanding and application of mutual information in complex data domains.

\section{Theorems}
\label{sup:sec:proof}

\subsection{Theorems and proofs}
\label{appendix:Theorems_and_poofs_for_sec.4.3}

In Section~\ref{subsec:sameclasssampling}, we adopt the same-class sampling method for positive pairing suggested in \citep{lee2023towards}. For convenience, we provide the theorems and proofs in \citep{lee2023towards} as follows.

\begin{proposition}[InfoNCE estimation as a lower bound of the true MI \citep{oord2018cpc,poole2019variational}]
    \label{proposition:true_estimated_MI}
    The InfoNCE estimation of mutual information is a lower bound of the true mutual information.
    \begin{equation}
    \hat{I}(X;Y) \le I(X;Y)  \label{eq:proposition:true_estimated_MI}
    \end{equation}
\end{proposition}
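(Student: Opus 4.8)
The plan is to obtain $\hat I_{\mathrm{InfoNCE}}(X;Y)\le I(X;Y)$ as a special case of the general variational lower bound of \Cref{def:vbmi}, so that the inequality is inherited for free. The starting point is that this bound holds for \emph{any} strictly positive $a$, and in fact even when $a$ is allowed to depend on $x$ in addition to $y$ (and on extra auxiliary randomness): writing $r=\frac{p(x)\,p(y)\,e^{f(x,y)}}{p(x,y)\,a}$, the difference between $I(X;Y)$ and the proposed right-hand side equals $\mathbb{E}_{p(x,y)}[\,r-1-\log r\,]$, which is nonnegative by the elementary inequality $\log t\le t-1$. Hence the whole task reduces to exhibiting a choice of $a$ for which the bound specializes to the InfoNCE formula of \Cref{tab:bounds}.

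Concretely, I would enlarge the sample space by drawing $K-1$ auxiliary ``negative'' variables $y'_2,\dots,y'_K$ i.i.d.\ from the marginal $p(y)$, independently of $(X,Y)$. Since these carry no information about $(X,Y)$, the bound applied to the pair $\big(X,\,(Y,y'_2,\dots,y'_K)\big)$ still has left-hand side $I(X;Y)$, while on the right-hand side I am now free to use the minibatch self-normalizer
\[
a\big(x,y,y'_2,\dots,y'_K\big)=\frac{1}{K}\Big(e^{f(x,y)}+\sum_{l=2}^{K}e^{f(x,y'_l)}\Big),
\]
absorbing it into the critic as in the previous paragraph. With this $a$, the logarithmic term of the bound becomes exactly the InfoNCE integrand of \Cref{tab:bounds}, with the positive slot $(x,y)$ playing the role of $(x_i,y_i)$ and the auxiliaries playing the role of the negatives.

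The only step that is not pure bookkeeping is evaluating the subtracted linear term $\mathbb{E}\big[\,e^{f(x,y)}/a\,\big]$ under the product measure $p(x)\,p(y)\prod_l p(y'_l)$. There, conditionally on $x$, the variables $y,y'_2,\dots,y'_K$ are $K$ exchangeable i.i.d.\ draws from $p(y)$, so $e^{f(x,y)}/a$ is the ``softmax weight'' of one distinguished slot among $K$ symmetric ones; by symmetry every slot carries the same expected weight, and the $K$ weights sum to $K$ identically, so the expectation equals $1$. This cancels the leading ``$+1$'' in the bound and leaves precisely $\hat I_{\mathrm{InfoNCE}}(X;Y)$ --- the batch-averaged form of \Cref{tab:bounds} coinciding with this single-positive form by the same exchangeability. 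I expect this exchangeability / partition-of-unity cancellation to be the crux; establishing the underlying variational inequality and verifying that it tolerates an $a$ depending on $(x,y)$ and on the auxiliary samples are the routine supporting steps.
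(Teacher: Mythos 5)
Your proof is correct, but it takes a genuinely different route from the paper's. The paper (following \citet{lee2023towards}) derives the bound via the Barber--Agakov-style decomposition: it posits an energy-based variational conditional $q(x|y)\propto p(x)e^{z_{x,y}/\tau}$, writes $I(X;Y)$ as the variational term plus $\mathbb{E}_{p(y)}[KL(p(x|y)\|q(x|y))]\ge 0$, and then replaces the intractable partition function $Z(y)=\mathbb{E}_{p(x)}[e^{z_{x,y}/\tau}]$ by its empirical mean over the batch (a $2K-1$-term mean including same-view negatives $z_{x_i,x_j}$), which produces the contrastive form and, as a by-product, immediately yields the $\log(2K-1)$ upper bound of \Cref{proposition:infonce_2Kminus1_inequality}. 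You instead take the NWJ-type route: the \Cref{def:vbmi} inequality (valid pointwise for any positive normalizer, by $\log t\le t-1$, even one depending on $x$ and on auxiliary randomness), applied to the augmented pair $\bigl(X,(Y,y'_2,\dots,y'_K)\bigr)$ with the minibatch self-normalizer, plus the exchangeability argument showing $\mathbb{E}_{p(x)p(y)\prod_l p(y'_l)}\bigl[e^{f(x,y)}/a\bigr]=1$ so that the ``$+1$'' cancels exactly. Each approach buys something: yours is the rigorous multi-sample derivation (as in \citet{poole2019variational}) for the $K$-negative estimator of \Cref{tab:bounds} and avoids the paper's ``$\approx$'' step, where the bound is only as exact as the Monte Carlo substitution for $Z(y)$; the paper's derivation is tailored to the $2K-1$-negative contrastive loss actually used in \citet{lee2023towards} and simultaneously delivers the companion $\log(2K-1)$ bound, which your construction does not produce directly. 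Minor points to tidy if you write this up: state explicitly that $I\bigl(X;(Y,Y'_{2:K})\bigr)=I(X;Y)$ follows from the chain rule and the independence of the auxiliaries from $(X,Y)$, and note that the batch-averaged form agrees with your single-positive form only in expectation (for each $i$, the other $y_j$ in the batch are i.i.d.\ marginal samples independent of $(x_i,y_i)$), and that \Cref{tab:bounds} writes $f$ where your derivation has $e^{f}$.
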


\begin{proposition}[$\log{(2K-1)}$ Bound \citep{oord2018cpc,poole2019variational}]
    \label{proposition:infonce_2Kminus1_inequality}
    The InfoNCE estimation of mutual information is upper bounded by $log{(2K-1)}$. 
    \begin{equation}
    \hat{I}(X;Y) \le \log{(2K-1)} \label{eq:proposition:infonce_2Kminus1_inequality}
    \end{equation}
\end{proposition}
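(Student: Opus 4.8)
The plan is to prove the bound pointwise, before any expectation is taken. Since $\log(2K-1)$ is a deterministic constant, it suffices to show that the random quantity appearing inside $\mathbb{E}_{p^K(x,y)}[\cdot]$ in the definition of $\hat{I}_\text{InfoNCE}(X;Y)$ is at most $\log(2K-1)$ for every realization of the batch. Because the expectation of a quantity bounded above by a constant is bounded by that same constant (monotonicity of $\mathbb{E}$), no probabilistic argument is needed beyond this observation, and the problem reduces to a deterministic inequality on the integrand.

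First I would reduce the claim to a single summand. The estimator is an average $\frac{1}{K}\sum_{i=1}^{K}$ of log-ratios, and an average of numbers each at most $\log(2K-1)$ is itself at most $\log(2K-1)$. Hence it is enough to bound one term $\log\frac{f(x_i,y_i)}{Z_i}$, where $Z_i$ denotes the contrastive normalizer attached to the $i$-th positive pair and $f>0$ is the critic.

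Next I would identify $Z_i$ explicitly and count its contributing terms, which is where the constant $2K-1$ enters. For the batch of $K$ pairs, the normalizer for the anchor $(x_i,y_i)$ is formed over the candidate set consisting of the positive pair together with the negatives obtained by holding $x_i$ fixed while varying the partner over $\{y_j\}_{j\neq i}$ and, symmetrically, by holding $y_i$ fixed while varying over $\{x_j\}_{j\neq i}$. This yields $1+2(K-1)=2K-1$ candidate pairs, each contributing a strictly positive critic value to $Z_i$. Writing $Z_i$ as the average of these $2K-1$ positive values, positivity of $f$ gives $Z_i \ge \frac{1}{2K-1}\,f(x_i,y_i)$, since the single positive term already lower-bounds the sum. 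Therefore $\frac{f(x_i,y_i)}{Z_i}\le 2K-1$, and monotonicity of $\log$ gives $\log\frac{f(x_i,y_i)}{Z_i}\le \log(2K-1)$.

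Finally I would assemble the pieces: averaging over $i$ and then over the batch distribution $p^K(x,y)$ both preserve the upper bound, establishing $\hat{I}_\text{InfoNCE}(X;Y)\le \log(2K-1)$. The hard part is not analytic but definitional: the constant $2K-1$ (rather than the more familiar $\log K$) hinges on adopting the symmetric candidate construction in which negatives are drawn in both directions, so the crux of the write-up is to state precisely which normalization convention underlies the estimator and to verify that exactly $2K-1$ strictly positive critic values enter each $Z_i$. Once that bookkeeping is pinned down, the inequality follows immediately from the positivity of $f$ and the monotonicity of $\log$.
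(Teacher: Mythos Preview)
Your proof is correct and follows essentially the same route as the paper's. The paper writes the estimator as $\log(2K-1)-\mathcal{L}_\text{InfoNCE}$ and then argues that $\mathcal{L}_\text{InfoNCE}\ge 0$ because the numerator $e^{z_{x_i,y_i}/\tau}$ is one of the $2K-1$ strictly positive summands in the denominator, making the log argument lie in $(0,1]$; this is exactly your pointwise observation that $Z_i\ge\frac{1}{2K-1}f(x_i,y_i)$ rewritten. One small bookkeeping discrepancy: the paper's $2K-1$ candidates are the $K$ pairs $(x_i,y_j)$ together with the $K-1$ within-view similarities $(x_i,x_j)_{j\ne i}$, rather than the symmetric $(x_j,y_i)_{j\ne i}$ set you describe, but this does not affect the argument since all that matters is that there are $2K-1$ positive terms and one of them equals the numerator.
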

\begin{proof}
   The proof is based on the variational bound derivation. Let $q(x|y)=\frac{p(x)}{Z(y)}e^{z_{x,y}/\tau},$ where $Z(y)=\mathbb{E}_{p(x)}[e^{z_{x,y}/\tau}]$.
   Then the true MI, $I(X;Y)$, can be bounded as the following. 
    \begin{align}
        &I(X;Y) \nonumber \\
        &= \mathbb{E}_{p(x,y)}\left[ \log{\frac{p(x,y)}{p(x)p(y)}} \right] = \mathbb{E}_{p(x,y)}\left[\log{\frac{p(x|y)}{p(x)}}\right] \\
        &= \mathbb{E}_{p(x,y)}\left[\log{\frac{q(x|y)}{p(x)}}\right] + \mathbb{E}_{p(y)}[KL(p(x|y)||q(x|y))]\\
        &\ge \mathbb{E}_{p(x,y)}\left[\log{\frac{q(x|y)}{p(x)}}\right] \label{eq12}\\
        & =\mathbb{E}_{p(x,y)}\left[\log{\frac{e^{z_{x,y}/\tau}}{Z(y)}}\right] \\ 
        &\approx \mathbb{E}\left[\log{\frac{e^{z_{x_i,y_i}/\tau}}{\frac{1}{2K-1}
        \sum\limits_{j=1}^{K} \left(\mathbbm{1}_{[j\neq i]}{e^{ z_{x_i,x_j} / \tau }} + e^{ z_{x_i,y_j}/\tau } \right)
        }}\right] \label{eq6}\\
        &=\log{(2K-1)} \label{eq7} \\ &\phantom{abc}+ \mathbb{E}\left[\log{\frac{e^{z_{x_i,y_i}/\tau}}{
        \sum\limits_{j=1}^{K} \left(\mathbbm{1}_{[j\neq i]}{e^{ z_{x_i,x_j} / \tau }} + e^{ z_{x_i,y_j}/\tau } \right)
        }}\right] \label{eq8} \\
        &=\log{(2K-1)} - \mathcal{L}_\textit{InfoNCE} \label{eq9} \\ 
        &\triangleq \hat{I}(X;Y)
    \end{align}
    The inequality in Eq.~\eqref{eq12} is due to the non-negativeness of KL-divergence, 
    and the approximation in Eq.~\eqref{eq6} is due to the replacement of the expectation with its empirical mean.

    The proof of Proposition~\ref{proposition:true_estimated_MI} is directly obtained from the above derivation of InfoNCE variational bound. The proof of Proposition~\ref{proposition:infonce_2Kminus1_inequality} also follows from the derivation. In Eq.~\eqref{eq9}, the term $-\mathcal{L}_\textit{InfoNCE}$ is always negative because the argument of the $\log$ term in Eq.~\eqref{eq8} is always between zero and one. This can be easily confirmed because the denominator term $\sum\limits_{j=1}^{K} \left(\mathbbm{1}_{[j\neq i]}{e^{ z_{x_i,x_j} / \tau }} + e^{ z_{x_i,y_j}/\tau } \right)$ is a sum of positive values and because the summation includes the numerator term $e^{ z_{x_i,x_j} / \tau }$.
\end{proof}

\begin{proposition}
    \label{proposition:inequality}
    For the same-class sampling $\mathcal{T}_\text{class}$ with its joint distribution $p_\text{class}(x,y)$, the mutual information between the first view $X$ and the second view $Y$ is \textbf{upper bounded} by the class entropy.
    \begin{equation}
    I_\text{class}(X;Y)\le H(C)
    \end{equation}
\end{proposition}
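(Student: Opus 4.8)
The plan is to exploit the Markov structure that same-class sampling imposes on the triple $(C, X, Y)$. By construction, $\mathcal{T}_\text{class}$ first draws the class label $C$ and then draws $X$ and $Y$ as two samples of class $C$ that are independent conditioned on $C$; hence $p_\text{class}(x, y \mid c) = p_\text{class}(x \mid c)\, p_\text{class}(y \mid c)$, which is exactly the Markov chain $X \leftarrow C \rightarrow Y$. The whole proof then reduces to combining the data processing inequality with the trivial bound of mutual information by entropy.

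First I would record the conditional-independence consequence $I_\text{class}(X; Y \mid C) = 0$. Next, using the chain rule of mutual information in the form $I_\text{class}(X; Y, C) = I_\text{class}(X; C) + I_\text{class}(X; Y \mid C)$ together with the elementary inequality $I_\text{class}(X; Y) \le I_\text{class}(X; Y, C)$ (conditioning on/adjoining an extra variable cannot decrease mutual information), I obtain $I_\text{class}(X; Y) \le I_\text{class}(X; C)$; this is just the data processing inequality applied to $X \leftarrow C \rightarrow Y$, and by symmetry one equally gets $I_\text{class}(X; Y) \le I_\text{class}(C; Y)$. Finally, since $C$ is a discrete variable, $I_\text{class}(X; C) = H(C) - H(C \mid X) \le H(C)$ because conditional entropy is non-negative. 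Chaining the two inequalities yields $I_\text{class}(X; Y) \le H(C)$, which is the claim.

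The argument is short, so the only real subtlety — and the step I would be most careful about — is justifying that same-class sampling genuinely produces conditional independence $X \perp Y \mid C$, rather than merely sharing the label in some weaker sense; this rests on the positive pair $(X, Y)$ being formed by two \emph{independent} draws from the class-conditional distribution, a property that must be read off from the definition of $\mathcal{T}_\text{class}$ in \citet{lee2023towards}. Everything else is a standard invocation of the chain rule and the non-negativity of (conditional) entropy, so I do not anticipate any further obstacles.
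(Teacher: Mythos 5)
Your proof is correct and follows essentially the same route as the paper: both exploit the Markov structure $X \leftarrow C \rightarrow Y$ induced by same-class sampling, apply the data processing inequality to obtain $I_\text{class}(X;Y) \le I_\text{class}(X;C)$, and then bound $I_\text{class}(X;C) = H(C) - H(C \mid X) \le H(C)$ by non-negativity of conditional entropy. The only difference is that you unpack the data processing inequality via the chain rule and $I_\text{class}(X;Y\mid C)=0$, whereas the paper invokes it directly; this is a presentational, not substantive, distinction.
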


\begin{proof}
From the construction of same-class sampling, the dependency can be expressed as $X\leftarrow C \rightarrow Y$. The dependency is Markov equivalent to $X\rightarrow C\rightarrow Y$ because both Markov chains encode the same set of conditional independencies. Then,
\begin{align}
    I(X;Y) &\le I(X;C) \\
           &= H(C) - H(C|X) \\
           &\le H(C),
\end{align}
where the first inequality follows from the data processing inequality~\citep{cover1999elements} and the second inequality follows from the entropy's positiveness for discrete random variables.
\end{proof}

With Proposition~\ref{proposition:true_estimated_MI} and Proposition~\ref{proposition:inequality}, the following main theorem can be obtained.
\begin{theorem}
    \label{theorem:equality}
    For the same-class sampling $\mathcal{T}_\text{class}$ with its joint distribution $p_\text{class}(x,y)$, the mutual information between the first view $X$ and the second view $Y$ is \textbf{equal to} the class entropy when the estimated mutual information is equal to the class entropy.  
    \begin{align}
        & \hat{I}_\text{class}(X;Y) = H(C) \\
        \Rightarrow \ \ & I_\text{class}(X;Y) = H(C) \label{eq:theorem:equality}
    \end{align}
\end{theorem}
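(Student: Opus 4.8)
The plan is to derive the conclusion by a simple squeeze (sandwich) argument between the two propositions already in hand, so essentially no new machinery is required. First I would invoke Proposition~\ref{proposition:true_estimated_MI}: the InfoNCE estimate is always a lower bound of the true mutual information, hence $\hat{I}_\text{class}(X;Y) \le I_\text{class}(X;Y)$ for the joint distribution $p_\text{class}(x,y)$ arising from same-class sampling. Second I would invoke Proposition~\ref{proposition:inequality}: because same-class sampling induces the Markov structure $X \leftarrow C \rightarrow Y$ (Markov-equivalent to $X \rightarrow C \rightarrow Y$), the data processing inequality together with nonnegativity of discrete entropy gives $I_\text{class}(X;Y) \le H(C)$.

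Chaining these two bounds with the hypothesis $\hat{I}_\text{class}(X;Y) = H(C)$ then yields
\begin{equation*}
H(C) = \hat{I}_\text{class}(X;Y) \le I_\text{class}(X;Y) \le H(C),
\end{equation*}
which forces every inequality in the chain to be an equality; in particular $I_\text{class}(X;Y) = H(C)$, which is exactly the assertion in Eq.~\eqref{eq:theorem:equality}. That completes the argument.

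The only point requiring care is conceptual rather than computational: the symbol $\hat{I}_\text{class}(X;Y)$ in the hypothesis must be understood as the tightest achievable InfoNCE estimate (the supremum over critic functions $f$, or its optimized empirical realization in the benchmark), since Proposition~\ref{proposition:true_estimated_MI} holds for every critic and the hypothesis is only meaningful once the critic has been trained so that the lower bound actually attains $H(C)$. I would state this explicitly, and note that \citet{lee2023towards} verified this condition holds (at least approximately) for the easily decodable image datasets used here, so the assumption is mild and empirically checkable. I do not expect any further obstacle — the assumption does the essential work, and the content of the theorem is precisely that this mild assumption is enough to pin the true MI exactly at $H(C)$, which is what legitimizes the benchmark construction.
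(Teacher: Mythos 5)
Your squeeze argument is exactly the paper's proof: it chains Proposition~\ref{proposition:true_estimated_MI} ($\hat{I}_\text{class}(X;Y) \le I_\text{class}(X;Y)$) with Proposition~\ref{proposition:inequality} ($I_\text{class}(X;Y) \le H(C)$) and notes that the hypothesis $\hat{I}_\text{class}(X;Y) = H(C)$ forces all three quantities to coincide. Your additional remark about interpreting $\hat{I}_\text{class}(X;Y)$ as the optimized estimate is a reasonable clarification but introduces nothing beyond what the paper's argument already requires.
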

\begin{proof}
The following is a direct result of Proposition~\ref{proposition:true_estimated_MI} and Proposition~\ref{proposition:inequality}.
    \begin{equation}
        \hat{I}_\text{class}(X;Y) \le I_\text{class}(X;Y) \le H(C)
    \end{equation}
Because the true MI $I_\text{class}(X;Y)$ is in the middle, $\hat{I}_\text{class}(X;Y) = H(C)$ means that all three are of the same value.  
\end{proof}
For the most popular downstream benchmarks, such as CIFAR and ImageNet,  the calculation of $H(C)$ is trivial. Now, thanks to the Theorem~\ref{theorem:equality}, we can identify the true MI with no ambiguity whenever $\hat{I}_\text{class}(X;Y) = H(C)$ is satisfied. This theorem will be utilized as a key enabler for a rigorous MI analysis.

\sloppy Theorem~\ref{theorem:equality} can be useful when the true MI value is required for an MI analysis. The theorem, however, is not useful when the condition $\hat{I}_\text{class}(X;Y) = H(C)$ is not satisfied. For such a case, we derive another equality that can be proven under an error-free classifier assumption. 
\begin{theorem}
    \label{theorem:equality2_errorfree}
    For the same-class sampling $\mathcal{T}_\text{class}$ with its joint distribution $p_\text{class}(x,y)$, the mutual information between the first view $X$ and the second view $Y$ is \textbf{equal to} the class entropy when there exists an error-free classification function $f_\text{class}(\cdot)$. 
    \begin{align}
        & \exists \ \text{An error-free classifier} \ f_\text{class}(\cdot):X\rightarrow C \\
        \Rightarrow \ \ & I_\text{class}(X;Y) = H(C) \label{eq:theorem:equality2_errorfree}
    \end{align}
\end{theorem}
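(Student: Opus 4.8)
The plan is to sandwich $I_\text{class}(X;Y)$ from above and below by $H(C)$. The upper bound $I_\text{class}(X;Y)\le H(C)$ is already supplied by Proposition~\ref{proposition:inequality}, so the whole task reduces to establishing the reverse inequality $I_\text{class}(X;Y)\ge H(C)$ from the existence of the error-free classifier.

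First I would unpack what the hypothesis buys us. An error-free classifier $f_\text{class}:X\to C$ means $f_\text{class}(X)=C$ almost surely; since same-class sampling draws the second view $Y$ from the same class-$c$ data whenever $C=c$, the conditional law of $Y$ given $C=c$ is supported on the set where $f_\text{class}$ returns $c$, so $f_\text{class}(Y)=C$ almost surely as well. (In a genuinely cross-domain instantiation one would instead invoke the analogous error-free classifier for the $Y$-domain, which the construction likewise guarantees.) In particular the class is a deterministic function of each view, so $H(C\mid X)=H(C\mid Y)=0$.

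Next I would invoke the data processing inequality. For any measurable maps $\phi,\psi$, the chain $\phi(X)\to X\to Y\to\psi(Y)$ is Markov (each endpoint is deterministic given its neighbour), so two applications of the data processing inequality give $I\big(\phi(X);\psi(Y)\big)\le I(X;Y)$. Taking $\phi=\psi=f_\text{class}$ and using the two almost-sure identities from the previous step yields $I(X;Y)\ge I\big(f_\text{class}(X);f_\text{class}(Y)\big)=I(C;C)=H(C)$, where the last equality uses $H(C\mid C)=0$ for the discrete class variable. Combining this with Proposition~\ref{proposition:inequality} forces $I_\text{class}(X;Y)=H(C)$, which is Eq.~\eqref{eq:theorem:equality2_errorfree}.

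The main obstacle I anticipate is the careful handling of ``error-free'': one must ensure that a classifier error-free with respect to the joint $p(x,c)$ is also error-free on each class-conditional $p(x\mid c)$ up to $p$-null sets, and that this property transfers to $Y$ through the same-class construction (or, in the cross-domain case, through a separate decodability assumption for the $Y$-domain). Once that bookkeeping is settled, the two information-theoretic steps — $H(C\mid C)=0$ and the data processing inequality — are entirely routine.
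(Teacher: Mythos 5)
Your proposal matches the paper's proof in essence: both use Proposition~\ref{proposition:inequality} for the upper bound and obtain the lower bound $H(C)=I(C;C)\le I(X;Y)$ by applying the data processing inequality to the Markov chain $C\rightarrow X\rightarrow Y\rightarrow C$, which is valid exactly because the error-free classifier makes $C$ a deterministic function of each view. Your extra bookkeeping about the classifier transferring to $Y$ is a reasonable elaboration of what the paper asserts implicitly, not a different argument.
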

In reality, such an error-free classification function $f_\text{class}$ does not exist for practical and interesting problems. Nonetheless, the equality result is useful for understanding MI in a high-accuracy regime. The proof is provided below.
\begin{proof}
    From the construction of same-class sampling, the dependency can be expressed as $S\rightarrow C \rightarrow X$ and $S\rightarrow C \rightarrow Y$ where $C$ is the class label of the sampled source image $S$. Because of the error-free classifier $f_\text{class}(\cdot)$, the class label information can be perfectly extracted from $X$ or $Y$. This means that $X \rightarrow C$ and $Y \rightarrow C$ also hold. Using the dependencies, we can conclude that the following is a valid Markov chain. 
    \begin{equation}
    S \rightarrow C \rightarrow X  \rightarrow C \rightarrow Y \rightarrow C    \label{eq:Markov_chain}
    \end{equation}
    The desired equality proof can be obtained by deriving an upper bound $I(X;Y) \le H(C)$ and a lower bound $H(C) \le I(X;Y)$. The upper bound follows directly from the Proposition~\ref{proposition:inequality}. The lower bound can be derived by applying the data processing inequality to the Markov dependency $C \rightarrow X \rightarrow Y \rightarrow C$ that can be confirmed from Eq.~\eqref{eq:Markov_chain}.
    \begin{align}
        &I(C;C) \le I(X;Y) \label{eq18} \\  
        \Rightarrow &H(C)  \le I(X;Y) \label{eq19} 
    \end{align}    
    Note that $I(C;C)$ is the self-information that is equal to $H(C)$. 
\end{proof}

\subsection{Proof of Theorem~\ref{theorem:bsc}: Detailed explanation for binary symmetric channel (BSC)}
\label{appendix:proof_for_bsc}

\begin{figure}[h]
    \centering
    \includegraphics[width=0.3\textwidth]{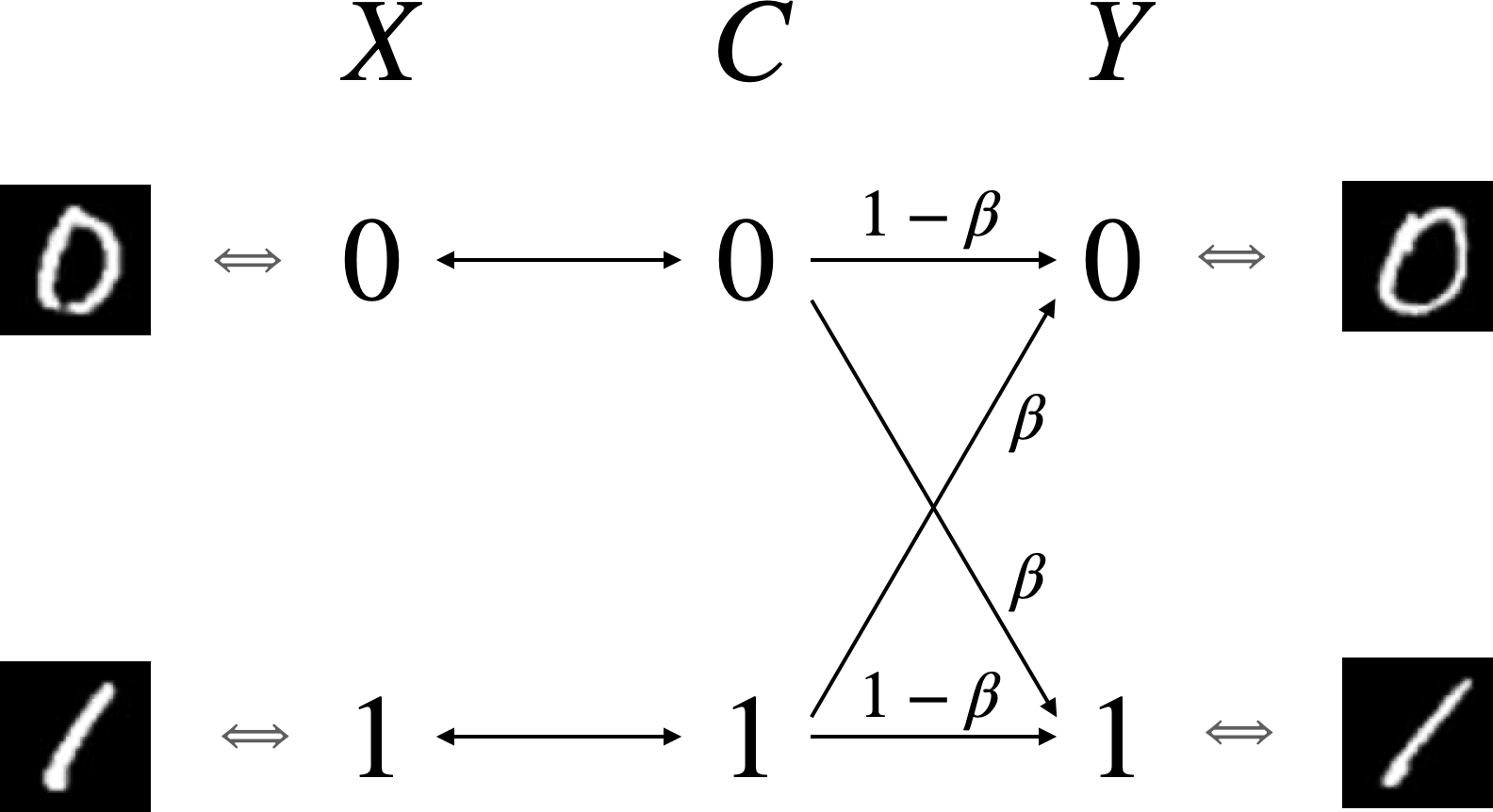}
    \caption{Construction of an image dataset with a non-integer MI value. We utilize binary symmetric channel to corrupt the label of $Y$.}
    \label{fig:bsc}
\end{figure}

\begin{proof}
    In Section~\ref{subsec:bsc}, we introduced binary symmetric channel (BSC)~\citep{cover1999elements} to construct a dataset with a non-integer MI value.
    We first consider the basic construction case of Figure~\ref{fig:dataset}a where $H(C)=1$. 
    As shown in Figure~\ref{fig:bsc}, the transmission process of BSC for $C\rightarrow Y$ corresponds to a binary channel where the the label of $Y$ is corrupted with probability $\beta$. Then, the mutual information can be evaluated as follows.
    \begin{align*}
        I(X;Y) = I(C;Y) &=
        H(C) - H(C|Y) \\ &= 1 - \sum{p(y)H(C|Y=y)} \\ 
        &= 1 - \sum{p(y)H(\beta)} \\ 
        &= 1 - H(\beta)
    \end{align*}
    The first equality comes from the Markov equivalence between $X$ and $C$. Note that $H(\beta)$ is symmetric and it can be expressed as $H(\beta)=- \beta\log{\beta} - (1-\beta)\log{(1-\beta)} = H(1-\beta)$. For the case where $H(C)$ is an integer larger than 1, we can apply the above BSC trick to each binary information source. Then, we obtain the general result of $I(X;Y) = H(C)\times(1 - H(\beta))$. For instance, when we have three independent binary information sources, $H(C)=3$ and $I(X;Y) = 3\times(1 - H(\beta))$ by applying BSC with the same parameter $\beta$ to each binary information source.
\end{proof}


\section{Detailed experimental setups}
\label{sec:appendix:setups}

We follow the setup of the case of multivariate Gaussian~\citep{belghazi2018mine,tschannen2019mutual,song2019understanding,poole2019variational}. For Gaussian datasets, the critic network $f(x,y)$ is trained to maximize $\hat{I}(X;Y)$ with the real-time-generated inputs $X$ and $Y$. For images and texts, the critic network $f(x,y)$ is trained to maximize $\hat{I}(X;Y)$ using a limited number of inputs $X$ and $Y$, specifically 50,000 samples. The positive pairs, \emph{i.e.,} samples from the joint distribution $p(x,y)$, are randomly sampled from the limited number of inputs, thus ensuring that the diversity of the paired samples is much larger than the size of inputs. 
When the other factors are fixed, only the true MI ($I(X;Y)$) is controlled during estimation. A complete estimation occurs over 20k steps, and we vary the true MI ($I(X;Y)$) over time. For calculating the estimated values, bias, variance, and MSE, we use all the estimations during 4000 steps with same true MI values.

Note that our method is not restricted to cases where $X$ and $Y$ have the same dimensionality. When $X$ and $Y$ have different dimensions (\emph{e.g.}, a 4096-dimensional image and a 7680-dimensional sentence embedding as in Figure~\ref{appendix:fig:critic-architecture}(Mixture)), we handle this by expanding the smaller dimension with redundant information, essentially copying parts of the original vector. This ensures no information loss and allows both $X$ and $Y$ to contribute equally to the training of the critic functions. For instance, in the case mentioned, the first 3584 dimensions of the 4096-dimensional image can be copied to expand X to 7680 dimensions, aligning it with Y.

We provide the architecture details for the critic network as follows. We reference the official code of \citep{tschannen2019mutual,song2019understanding}.
Critic functions model the relationship between all pairs of $(x_i, y_j)$ $\forall i,j \in [1, K]$. Variational bounds approximate MI by using the diagonal terms as the values from the joint distribution $p(x,y)$ and the off-diagonal terms as the values from the marginal distribution $p(x)p(y)$.
For the separable critic $f(x_i,y_j)=f_1(x_i)^Tf_2(y_j)$, we use the same architecture for $f_1$ and $f_2$ as a 2-layer MLP with 256 units and 32-dimensional outputs. For the concatenated critic $f(x_i,y_j)=f([x_i,y_j])$, we use 2-layer MLP with 256 units. To train the critic networks, we set the batch size $K$ as 64. We optimize the variational bounds of mutual information using Adam~\citep{kingma2014adam} with a learning rate of 0.0005.
The detailed codebase and datasets are available in \url{https://github.com/kyungeun-lee/mibenchmark}.

\section{Additional experimental results}
\label{sec:appendix:results}

All the raw results, in addition to the codebase and materials are available in \url{https://github.com/kyungeun-lee/mibenchmark}.

\subsection{Changing embedding networks for NLP dataset}
\label{appendix:nlp_embed}
For sentence embeddings, we replaced the BERT encoder with DistilBERT~\citep{sanh2019distilbert} and RoBERTa~\citep{liu2019roberta} on the same dataset. As shown in Figure~\ref{fig:representation-text}, estimation holds for the deep representations regardless of which embedding encoder is used. We also found that all the variational MI estimators work well for all cases.

From our findings, we conclude that the variational MI estimators are unexpectedly accurate for images and sentence embeddings datasets, and also for their deep representations.

\begin{figure*}[h]
    \centering
    \includegraphics[width=\textwidth]{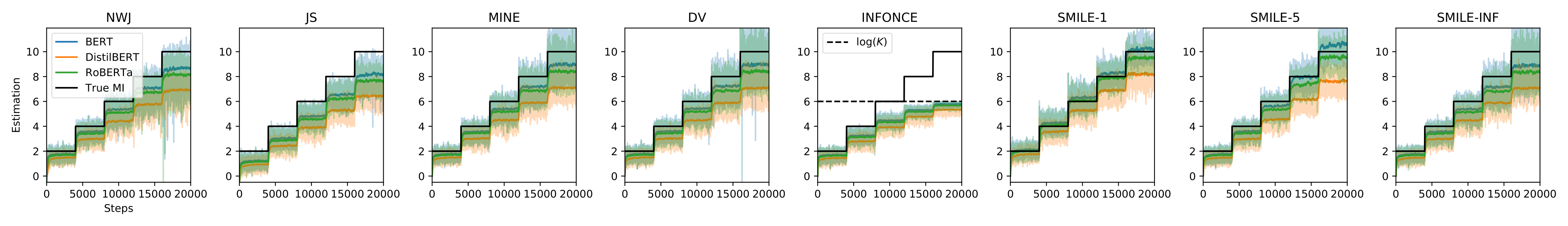}
    \vspace{-0.2cm}
    \caption{
    Estimation results for different types of embedding encoders for sentence embeddings. For all types of embedding encoders, we achieve accurate estimations.}
    \vspace{-0.3cm}
    \label{fig:representation-text}
\end{figure*}

\clearpage

\subsection{Additional results of Section~\ref{subsec:critic_architecture}}

\begin{figure*}[h!]
    \subfloat[Gaussian $d_r=10$]{\includegraphics[width=0.97\textwidth]{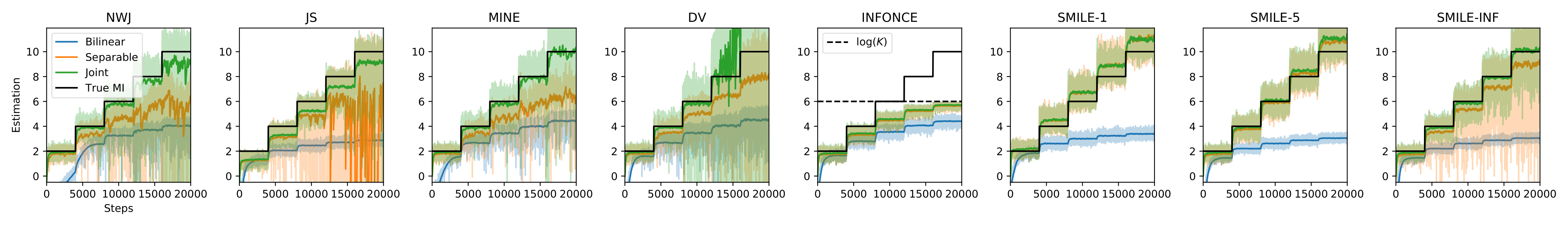}} \\
    \subfloat[Images $d_r=64^2$]{\includegraphics[width=0.97\textwidth]{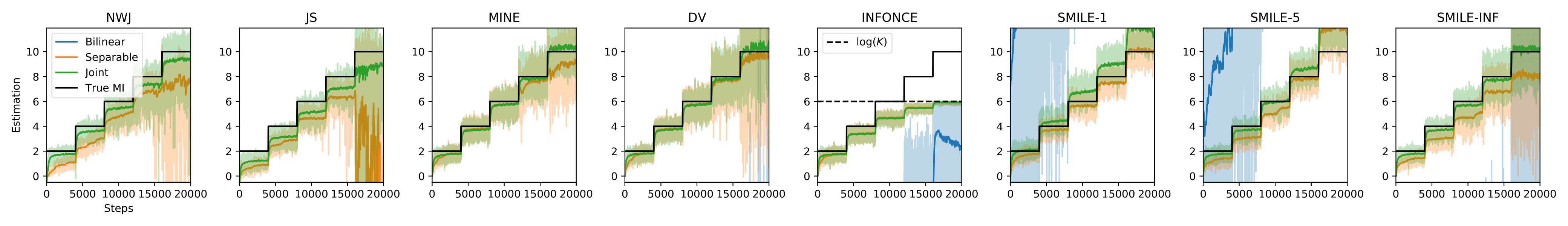}} \\
    \subfloat[Sentence Embeddings $d_r=7680$]{\includegraphics[width=0.97\textwidth]{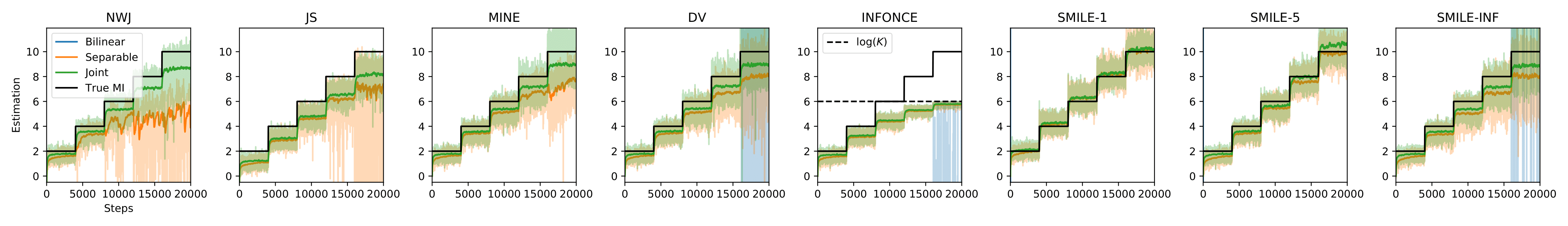}} \\
    \subfloat[Mixture $d_r=7680$]{\includegraphics[width=0.97\textwidth]{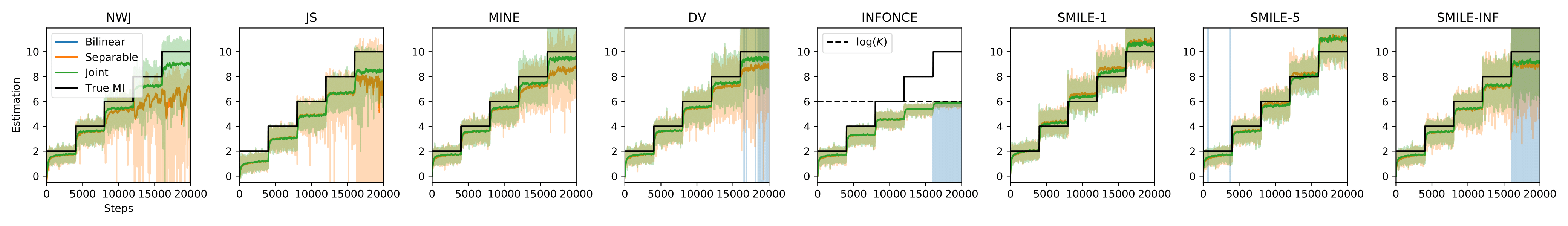}}
    \caption{Estimation results for four different benchmarks with $d_s=10$. Following the experimental setup of \citet{poole2019variational}, we change the true MI values stepwise and the other hyperparameters are fixed.
    }
    \label{appendix:fig:critic-architecture}
\end{figure*}

\clearpage

\subsection{Additional results of Section~\ref{subsec:critic_capacity}}
\label{sup:sec:critic_depth}

\begin{figure}[h]
    \centering
    \subfloat[Gaussian]{
    \includegraphics[width=\textwidth]{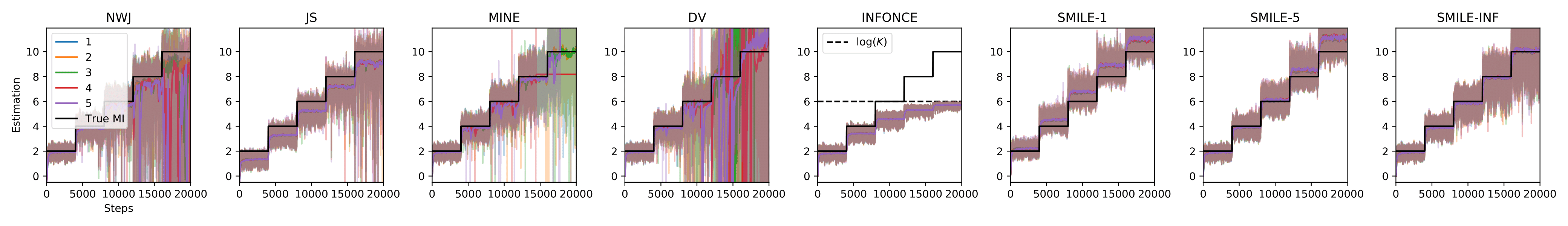}} \\
    \subfloat[Images]{
    \includegraphics[width=\textwidth]{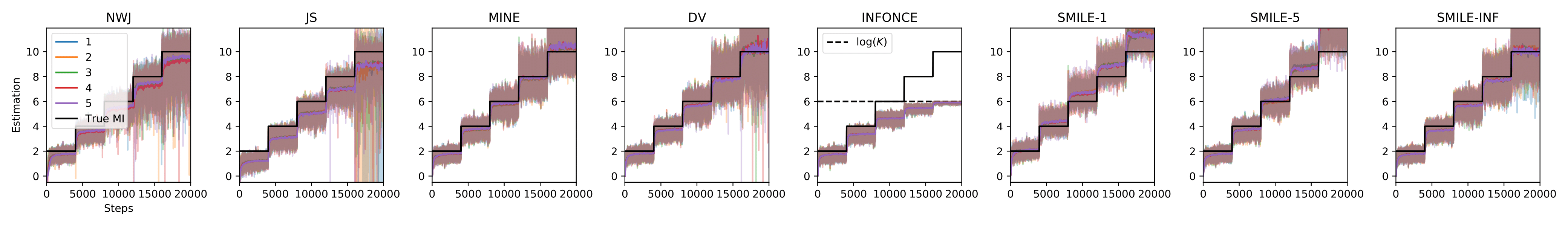}
    }\\
    \subfloat[Sentence embeddings]{
    \includegraphics[width=\textwidth]{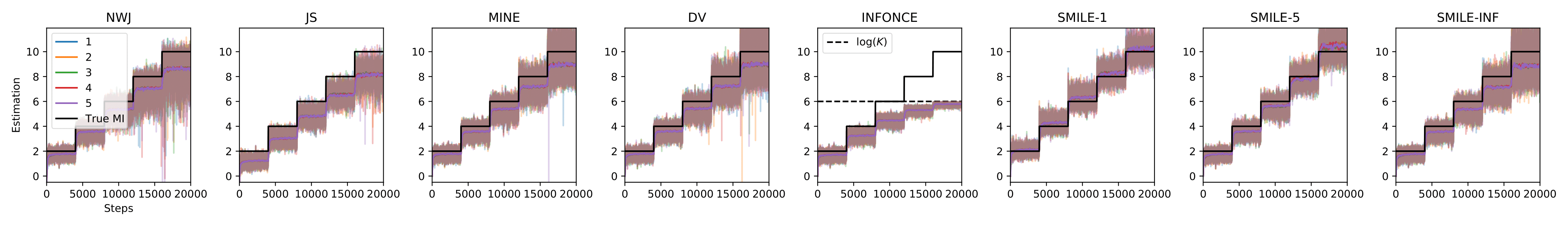}
    }
    \caption{Full results for Table~\ref{tab:critic_depth}. We calculate the bias, variance, and MSE for each true MI values in the manuscript.}
    \label{fig:supp:criticdepth}
\end{figure}

\clearpage

\subsection{Additional results of Section~\ref{subsec:representation}}
\label{sec:supp:rep}

\begin{figure*}[h]
    \subfloat[Images]{
    \includegraphics[width=0.97\textwidth]{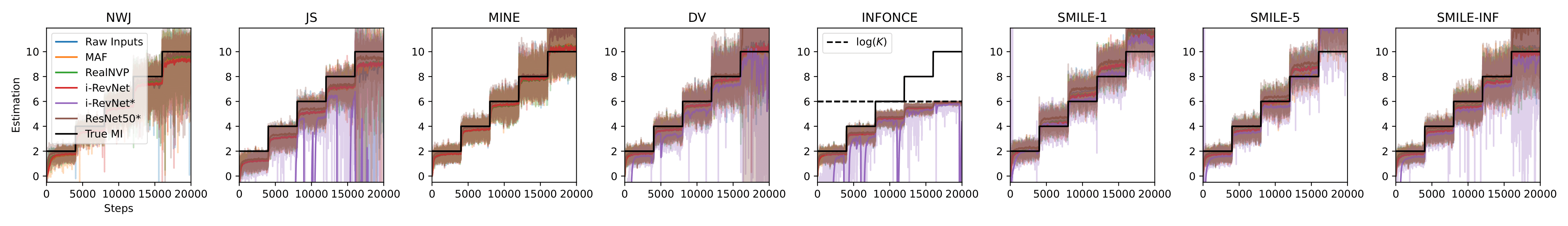}} \\
    \subfloat[Sentence embeddings]{
    \includegraphics[width=0.97\textwidth]{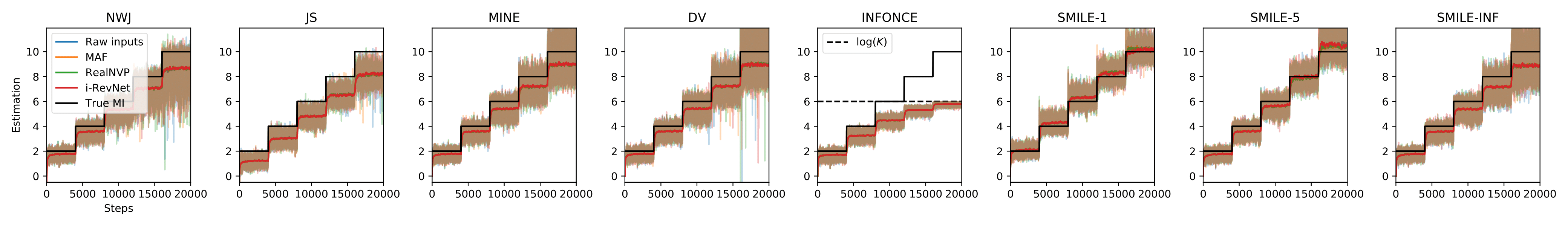}}
    \caption{
    Estimation results for deep representations. For simplicity, we use the deep networks at random initialization except for the ResNet-50, which is pre-trained on the MNIST dataset.
    }
    \label{fig:representation}
\end{figure*}

\begin{figure}[h!]
    \centering
    \subfloat[$I(X;Y)=2$bits]{
    \includegraphics[width=0.3\textwidth]{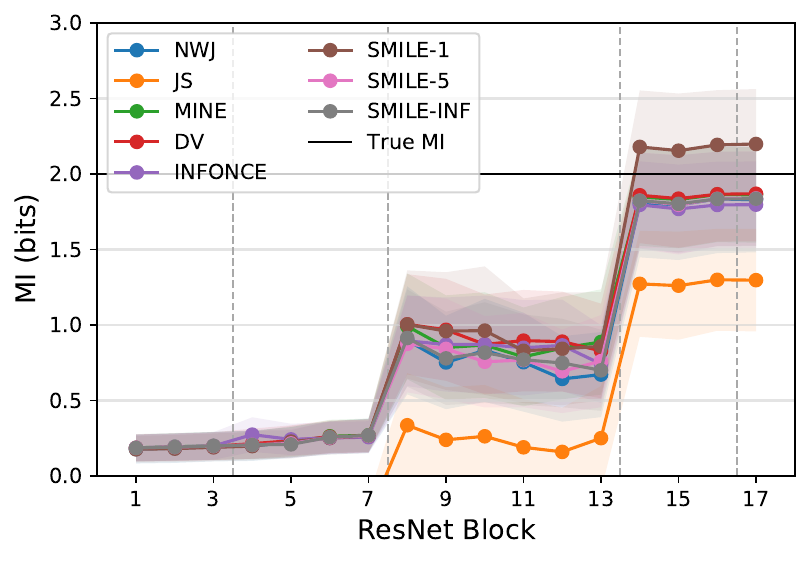}}
    \subfloat[$I(X;Y)=4$bits]{
    \includegraphics[width=0.3\textwidth]{figures/mnist-background-hiddens-4bits.pdf}}
    \subfloat[$I(X;Y)=6$bits]{
    \includegraphics[width=0.3\textwidth]{figures/mnist-background-hiddens-6bits.pdf}} \\
    \subfloat[$I(X;Y)=8$bits]{
    \includegraphics[width=0.3\textwidth]{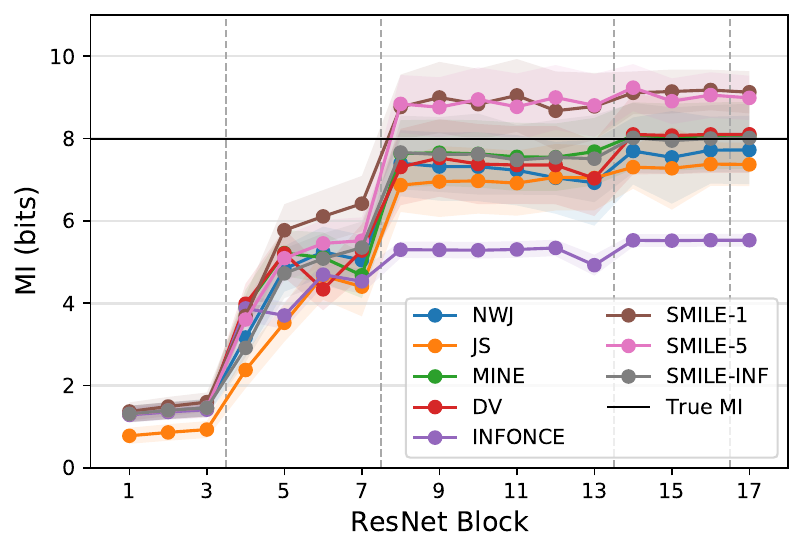}}
    \subfloat[$I(X;Y)=10$bits]{
    \includegraphics[width=0.3\textwidth]{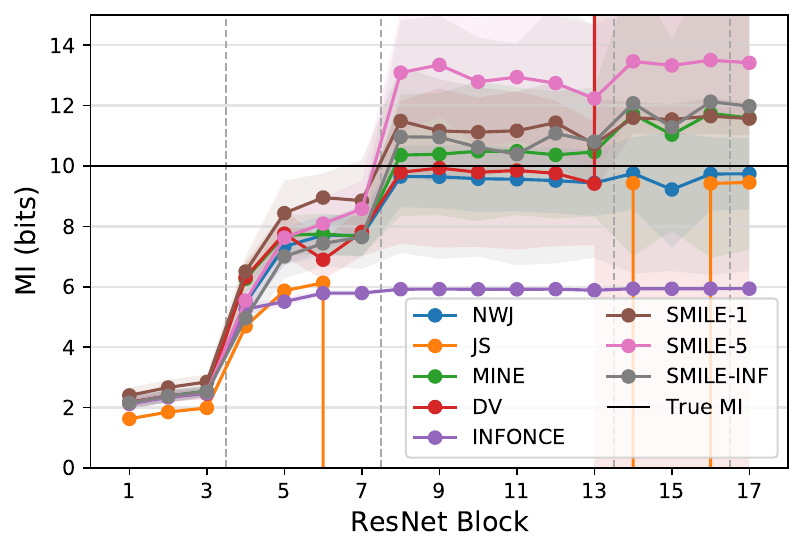}}
    \caption{MI estimation results for hidden layers of ResNet-50.}
    \label{fig:appendix:hidden}
\end{figure}

\end{document}